\documentclass{article}

\usepackage{comment}
\usepackage[preprint]{neurips_2026}

\usepackage[utf8]{inputenc} 
\usepackage[T1]{fontenc}    
\usepackage{hyperref}       
\usepackage{url}            
\usepackage{booktabs}       
\usepackage{amsfonts}       
\usepackage{nicefrac}       
\usepackage{microtype}      
\usepackage{xcolor}         
\usepackage{graphicx}
\usepackage{wrapfig}
\usepackage{subcaption}
\usepackage{caption}
\usepackage{booktabs} 
\usepackage{placeins}
\usepackage{multirow}
\usepackage[norelsize,boxed,noend]{algorithm2e}
\usepackage{float}
\SetArgSty{textrm} 
\SetAlCapSkip{0.7em} 
\SetKwInput{KwRequire}{Input}
\SetKwInput{KwEnsure}{Output}

\usepackage{amsmath}
\usepackage{amssymb}
\usepackage{mathtools}
\usepackage{amsthm}
\usepackage{enumitem}
\usepackage{tikz}
\usetikzlibrary{positioning}
\usepackage[capitalize,noabbrev]{cleveref}

\theoremstyle{plain}
\newtheorem{theorem}{Theorem}[section]
\newtheorem{proposition}[theorem]{Proposition}
\newtheorem{lemma}[theorem]{Lemma}

\theoremstyle{definition}
\newtheorem{definition}[theorem]{Definition}

\theoremstyle{remark}

\title{Counterfactual Maps: \\ What They Are and How to Find Them}

\author{%
  Awa Khouna \\
  Polytechnique Montréal\\
  \texttt{awa.khouna@polymtl.ca} \\
  \And
  Julien Ferry \\
  Polytechnique Montréal\\
  \texttt{julien.ferry@polymtl.ca} \\
  \And 
  Thibaut Vidal \\
  Polytechnique Montréal\\
  \texttt{thibaut.vidal@polymtl.ca} \\
}

\begin{document}
\usetikzlibrary{calc}

\maketitle

\begin{abstract}
    Counterfactual explanations are a central tool in interpretable machine learning, yet computing them exactly for complex models remains challenging. For tree ensembles, predictions are piecewise constant over a large collection of axis-aligned hyperrectangles, implying that an optimal counterfactual for a point corresponds to its projection onto the nearest rectangle with an alternative label under a chosen metric. Existing methods largely overlook this geometric structure, relying either on heuristics with no optimality guarantees or on mixed-integer programming formulations that do not scale to interactive use. 
    In this work, we revisit counterfactual generation through the lens of nearest-region search and introduce \emph{counterfactual maps}, a global representation of recourse for tree ensembles. Leveraging the fact that any tree ensemble can be compressed into an equivalent partition of labeled hyperrectangles, we cast counterfactual search as the problem of identifying the \emph{generalized Voronoi cell} associated with the nearest rectangle of an alternative label. This leads to an exact, amortized algorithm based on volumetric k-dimensional (KD) trees, which performs branch-and-bound nearest-region queries with explicit optimality certificates and sublinear average query time after a one-time preprocessing phase.
    Our experimental analyses across several real datasets from high-stakes application domains show that this approach delivers globally optimal counterfactual explanations with millisecond-level latency, achieving query times that are orders of magnitude faster than existing exact, cold-start optimization methods.
\end{abstract}

\section{Introduction}

Counterfactual explanations have become a cornerstone of interpretable machine learning, providing actionable insights by identifying how an input must change to alter a model's prediction. Their appeal spans regulated domains such as credit scoring, healthcare, and criminal justice, where explanations must be both meaningful and reliable. Despite this importance, generating \emph{optimal} counterfactual explanations remains a major challenge for complex models, particularly tree ensembles.

Tree ensembles, including random forests and gradient boosted trees, are among the most widely used models for tabular data. They achieve strong predictive performance while offering some interpretability through feature-based splits. Geometrically, these models partition the input space into a collection of axis-aligned hyperrectangles, each with a constant prediction. A globally optimal counterfactual for a given input is therefore the point of minimum distance within any hyperrectangle with an alternative label.

\begin{figure*}[t]
    \centering

    \begin{subfigure}[t]{0.495\textwidth}
    \centering\includegraphics[width=0.875\textwidth]{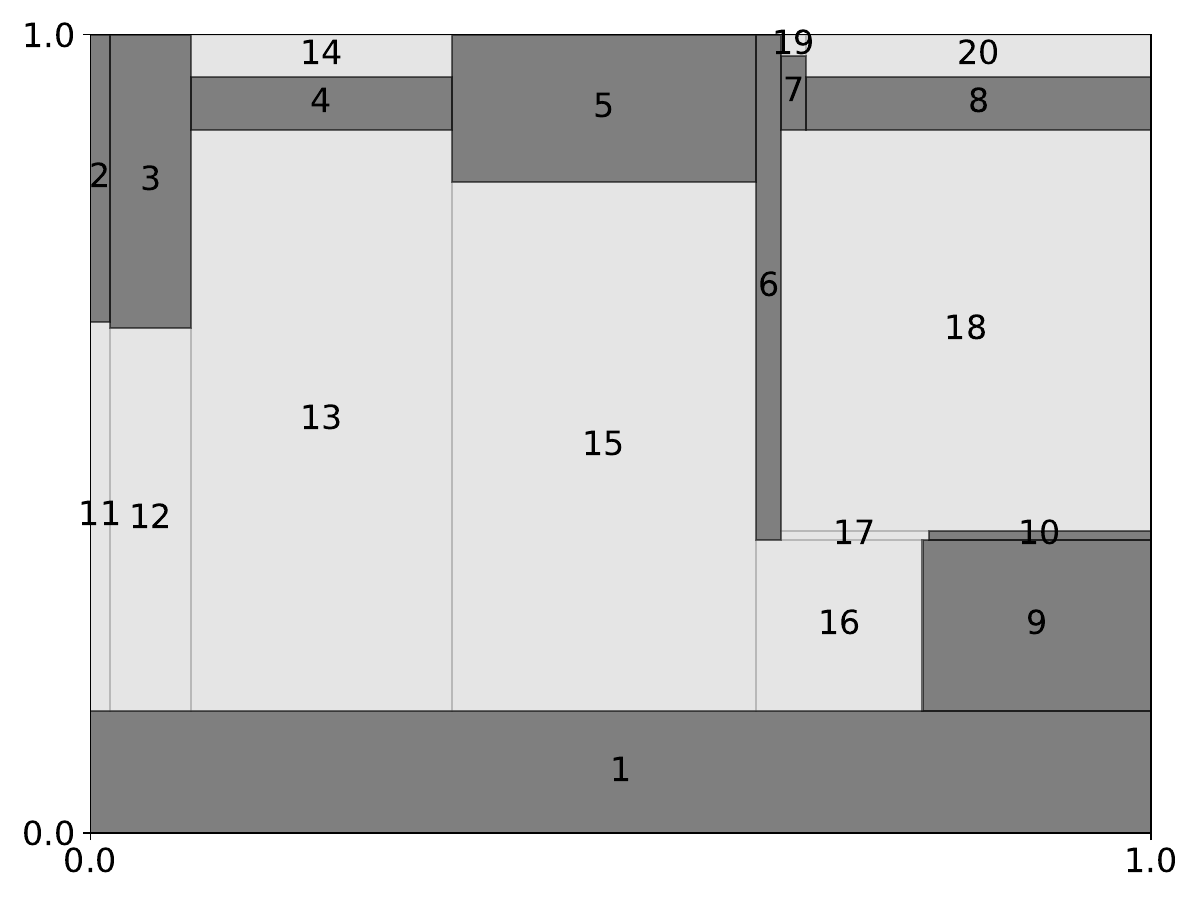}\vspace*{-0.35cm}
    \caption{Decision boundaries of the model.}
   \label{fig:decision_boundary}
  \end{subfigure}
    \hfill
    \begin{subfigure}[t]{0.495\textwidth}
    \centering\includegraphics[width=0.875\textwidth]{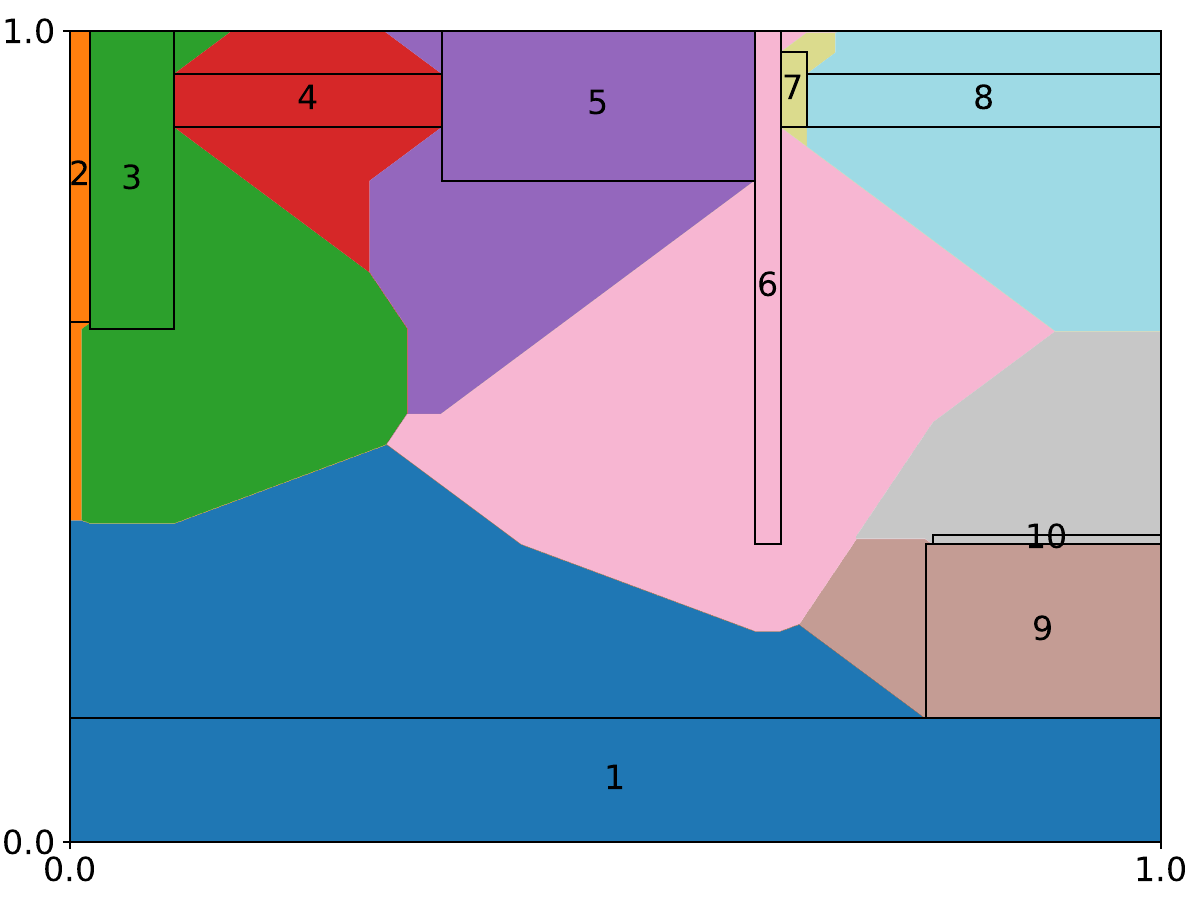}\vspace*{-0.35cm}
  \caption{Counterfactual map for the $L_1$ norm.}
   \label{fig:cf_map}
  \end{subfigure}

    \caption{Counterfactual map for target class \smash{\colorbox[HTML]{7F7F7F}{\textcolor{white}{$y_2$}}}. The map corresponds to the \emph{generalized Voronoi partition} induced by the hyperrectangles of this class. For any input query, it identifies the closest hyperrectangle of this class; projecting the query onto that region yields a globally optimal counterfactual. This illustration uses a random forest with two depth-5 trees trained on the ``blobs'' dataset.}
    \label{fig:illustrative_example}
\end{figure*}

Most existing approaches treat counterfactual generation as a \emph{per-instance} optimization problem.
Heuristic approaches \citep{Tolomei_2017, carreira2023LIRE, Mothilal_2020} traverse decision paths within trees or perturb inputs to find label-changing points, but provide no guarantees of optimality. In practice, these methods may converge to poor local minima, overshoot the minimal required changes, or fail to return a feasible counterfactual even when one exists, exposing \emph{some} of the end users to explanations that may be invalid or unnecessarily costly \citep{parmentier2021optimal}. Exact methods instead rely on mixed-integer programming or satisfiability formulations \citep{parmentier2021optimal, karimi2020modelagnostic}, which provide global optimality certificates at the cost of scalability: solving a mathematical program for each query quickly becomes prohibitive for large ensembles or interactive use. Consequently, no approach simultaneously offers global optimality and low query latency for realistic tree ensembles.

In this work, we argue that this limitation arises from focusing on individual counterfactuals rather than exploiting the \emph{global structure} of counterfactual explanations through preprocessing. We introduce \emph{counterfactual maps}: global representations that associate each point in the feature space with its nearest decision region of an alternative label, as illustrated in Figure~\ref{fig:illustrative_example}. Such maps enable direct access to globally optimal counterfactuals at query time without requiring the solution of a new optimization problem.

A key observation underlying counterfactual maps is that any tree ensemble admits an equivalent representation as a decision tree partitioning the feature space into labeled hyperrectangles \citep{vidal2020bornagain}. Under this representation, counterfactual generation reduces to a nearest-region problem. The feature space can therefore be partitioned into \emph{generalized Voronoi cells} induced by these hyperrectangles, where each cell identifies the region yielding the closest counterfactual. Counterfactual maps can be viewed as implicit representations of this Voronoi decomposition.

Building on this perspective, we develop methods to efficiently construct and query counterfactual maps using a two-stage preprocess-and-query paradigm. In a one-time preprocessing step, we extract a labeled hyperrectangular partition equivalent to the ensemble’s decision function and index it using geometric search structures. At query time, this index identifies the nearest opposing region and produces a globally optimal counterfactual by projection. We instantiate this framework using orthogonal range searching and volumetric KD-trees, which provide explicit optimality certificates and achieve sublinear average query time. Our experiments on synthetic and real-world tabular datasets demonstrate that counterfactual maps enable globally optimal counterfactual explanations with interactive latency, yielding fast, guaranteed, and scalable counterfactuals for tree ensembles.

\section{Counterfactual Maps}
\label{sec:count-maps}

We consider classifiers of the form $\mathcal{T}:\mathcal{X}\to\mathcal{Y}$, where the input domain is a product space 
\mbox{$\mathcal{X}=\mathcal{X}_1\times\cdots\times\mathcal{X}_m$}, with mixed continuous, categorical, and binary features, and the output space consists of $c$ classes, \mbox{$\mathcal{Y}=\{y_1,\ldots,y_c\}$}.

\textbf{Tree ensembles and induced partitions.}
A tree ensemble (random forest, gradient-boosted trees) induces a piecewise-constant decision function. Each decision tree partitions $\mathcal{X}$ into hyperrectangles associated with leaf predictions. Aggregating all trees yields a finite collection of \emph{regions} over which the ensemble prediction is constant. These regions can be represented as a finite set of disjoint axis-aligned hyperrectangles \citep{vidal2020bornagain}:
$$
\mathbb{H} \triangleq \{\mathcal{H}_1,\ldots,\mathcal{H}_n\}, \ \
\bigcup_{i=1}^n \mathcal{H}_i = \mathcal{X}, \ \
\mathcal{H}_i\cap\mathcal{H}_j=\emptyset \ \ \forall i \neq j.
$$
Each region $\mathcal{H}_i$ is associated with a unique class label $\smash{\mathcal{T}(\mathcal{H}_i)}$. In the worst case, the number of regions $n$ grows exponentially with the depth of the ensemble.

\textbf{Optimal counterfactuals.}
Let $d_p(\cdot,\cdot)$ denote an $L_p$ distance on $\mathcal{X}$. For any point $x\in\mathcal{X}$ and hyperrectangle $\mathcal{H}\subseteq\mathcal{X}$, we define the point--rectangle distance
\[
d_p(x,\mathcal{H}) \triangleq \inf_{x'\in\mathcal{H}}\|x - x'\|_p,
\]
which admits a closed-form, coordinate-wise expression for all $p \geq 1$. Let $\Pi_{\mathcal{H}}(x)$ denote a projection of $x$ onto $\mathcal{H}$ attaining this minimum. More generally, we allow feature-wise weights $w\in\mathbb{R}_{+}^m$ to reflect different actionability costs and use the weighted norm $\|u\|_{p,w} \smash{\triangleq}\|\mathrm{diag}(w)\,u\|_p$, equivalently rescaling coordinates by $\tilde x=\mathrm{diag}(w)x$. For readability, we write $d_p$ and understand it as either the standard or weighted $L_p$ distance.

For an input $x$ with prediction $y=\mathcal{T}(x)$ and a target label $y'\neq y$, the set of regions with label $y'$ is:
\[
\mathbb{H}_{y'} \triangleq \{\mathcal{H}\in\mathbb{H}\mid \mathcal{T}(\mathcal{H})=y'\}.
\]

A globally optimal counterfactual targeting $y'$ is obtained by projecting $x$ onto any minimizer
\[
\mathcal{H}^\star \in \arg\min_{\mathcal{H}\in\mathbb{H}_{y'}} d_p(x,\mathcal{H}),
\qquad
x_{\mathrm{cf}} \triangleq \Pi_{\mathcal{H}^\star}(x).
\]

\begin{definition}[Counterfactual map]
Given a tree ensemble~$\mathcal{T}$ inducing a partition $\mathbb{H}$ of the feature space, a distance~$d_p$, and a target label~$y'$, a \emph{counterfactual map} is a mapping
$\smash{\hat{f}_{y'} : \mathcal{X} \to \mathbb{H}_{y'}}$
such that, for every $x\in\mathcal{X}$,
$
\smash{\hat{f}_{y'}(x)} \in \arg\min_{\mathcal{H}\in\mathbb{H}_{y'}} d_p(x,\mathcal{H}).
$
\end{definition}

The associated counterfactual explanation is then obtained by projection:
$\smash{x_{\mathrm{cf}} = \Pi_{\hat{f}_{y'}(x)}(x)}$.
Note that the minimizer in Definition~2.1 may not be unique (notably for $p\in\{1,\infty\}$). Any selection from the argmin set yields a globally optimal counterfactual via projection.

\textbf{Problem statement.}
Given a tree ensemble $\mathcal{T}$, a distance $d_p$, and a target label $y'$, our objective is to construct a counterfactual map (or an equivalent data structure) that, for any query point $x$, identifies the nearest region of class $y'$. Subsequently, the counterfactual is obtained by projecting onto this region. This paper focuses on efficiently constructing and querying such counterfactual maps.

\textbf{Structural properties.}
Counterfactual maps admit a geometric interpretation as generalized Voronoi diagrams of the feature space induced by the hyperrectangles in $\mathbb{H}_{y'}$. The geometry of the resulting cells depends on the choice of the distance $d_p$, as formalized below.

\begin{proposition}
\label{prop:cf_maps_structure}
For $p\in[1,\infty]$ and $\mathcal{H}_i\in \mathbb{H}_{y'}$, define the (closed) $p$-Voronoi region
$\smash{\smash{\mathcal{V}^{(p)}_i}} \smash{\triangleq} \{x\in\mathcal{X}:\ d_p(x,\mathcal{H}_i)\le d_p(x,\mathcal{H}_j)\ \forall\,\mathcal{H}_j\in\mathbb{H}_{y'}\},$
and for $\mathcal{H}_i,\mathcal{H}_j\in\mathbb{H}_{y'}$ the bisector $\smash{B^{(p)}_{ij}} \smash{\triangleq} \{x\in\mathcal{X}:\ d_p(x,\mathcal{H}_i)=d_p(x,\mathcal{H}_j)\}$.

$\bullet$ For $p\in\{1,\infty\}$ and any $\mathcal{H}_i,\mathcal{H}_j\in\mathbb{H}_{y'}$, the bisector $\smash{B^{(p)}_{ij}}$ is a finite union of polyhedra. Consequently, each region $\smash{\mathcal{V}^{(p)}_i}$ is a finite union of polyhedra.

$\bullet$ For $p=2$ and any $\mathcal{H}_i,\mathcal{H}_j\in\mathbb{H}_{y'}$, the bisector $\smash{B^{(2)}_{ij}}$ is contained in a finite union of quadratic hypersurfaces; in particular, Voronoi regions are not polyhedral in general.
\end{proposition}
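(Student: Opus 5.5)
The plan is to first write the point--rectangle distance in closed, coordinate-wise form and then partition the ambient space into finitely many cells on which this distance is a fixed elementary function of $x$. Write $\mathcal{H}_i=\prod_{k=1}^m[a_k^i,b_k^i]$, taking the closure if some sides are open; this does not change $d_p$. Then $d_p(x,\mathcal{H}_i)=\|\delta^i(x)\|_p$, where $\delta^i_k(x)=\max\{a^i_k-x_k,\,0,\,x_k-b^i_k\}$. For each coordinate $k$, the breakpoints $a^i_k,b^i_k,a^j_k,b^j_k$ split the real line into at most five closed intervals. Taking products over $k$ gives at most $5^m$ closed boxes $C$ covering $\mathbb{R}^m$. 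On each box, every $\delta^i_k$ and every $\delta^j_k$ is a fixed affine function of $x_k$: either $a-x_k$, $0$, or $x_k-b$.

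For $p=1$, on each box $C$ the difference $d_1(x,\mathcal{H}_i)-d_1(x,\mathcal{H}_j)$ is affine. So $B^{(1)}_{ij}\cap C$ is the intersection of $C$ with a hyperplane, or all of $C$ when the affine function vanishes identically, or empty. In every case it is a polyhedron, and taking the union over the finitely many boxes gives the first claim.

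For $p=\infty$, on $C$ the distance $d_\infty(x,\mathcal{H}_i)=\max_k \delta^i_k(x)$ is a maximum of finitely many affine functions. I refine $C$ further by fixing which index attains the maximum for $i$ and which attains it for $j$. Each such choice is described by finitely many linear inequalities $\delta^i_k\ge\delta^i_{k'}$, and these finitely many polyhedra cover $C$. On each refined piece both distances are affine, so the bisector restricted to the piece is again a polyhedron.

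For the Voronoi regions I use $\mathcal{V}^{(p)}_i=\bigcap_j\{x: d_p(x,\mathcal{H}_i)\le d_p(x,\mathcal{H}_j)\}$. The same decomposition shows that each sublevel set $\{d_p(\cdot,\mathcal{H}_i)\le d_p(\cdot,\mathcal{H}_j)\}$ is a finite union of polyhedra, since on each piece it is cut out by one linear inequality. A finite intersection of finite unions of polyhedra distributes into a finite union of pairwise intersections of polyhedra, each of which is a polyhedron. Finally I intersect with $\mathcal{X}$. This step assumes $\mathcal{X}$ is itself a box or a finite union of boxes; for categorical or binary features I would treat $\mathcal{X}$ as embedded in $\mathbb{R}^m$, and the intersection with a product of finite sets and intervals stays a finite union of polyhedra, possibly of lower dimension.

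For $p=2$, on each box $C$ the squared distances $d_2(x,\mathcal{H}_i)^2$ and $d_2(x,\mathcal{H}_j)^2$ are sums of squares of affine functions, hence quadratic polynomials. Equality of the distances is equivalent to equality of their squares, so $B^{(2)}_{ij}\cap C\subseteq\{q_C(x)=0\}$, where $q_C$ is the quadratic difference. If $q_C\equiv0$, that piece is all of $C$; I will simply note that this degenerate case can occur and still fits under a suitably liberal reading of ``contained in''. To show the regions are not polyhedral in general, I give an explicit example in $\mathbb{R}^2$. Take $\mathcal{H}_i$ to be a single point at the origin, or a small box, and $\mathcal{H}_j=\{x: x_1\ge1,\ x_2\ge 1\}$, suitably truncated. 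In the region where only the corner of $\mathcal{H}_j$ is active and $x$ lies outside both rectangles along different axes, the bisector is a parabola such as $x_1^2+x_2^2=(x_1-1)^2$, which forces a nonlinear boundary of $\mathcal{V}^{(2)}_i$.

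The main obstacle I expect is handling degenerate pieces, meaning boxes on which the two distance functions coincide identically, which are full-dimensional and not hypersurfaces. For $p\in\{1,\infty\}$ this is harmless, since those pieces are still polyhedra. For $p=2$ I need to phrase the claim so that such pieces are either excluded or counted, and I also need to choose the counterexample so that the quadratic is genuinely non-degenerate on an open set.
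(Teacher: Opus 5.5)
Your proposal is correct and is essentially the paper's proof. You use the clamp formula $d_p(x,\mathcal{H})=\|\delta(x,\mathcal{H})\|_p$, a finite box decomposition (the paper's sign-pattern cells $R_s(\mathcal{H})$) on which each distance is affine, max-affine refined by the arg-max index, or quadratic, cell-wise bisectors, and a point-versus-face parabola (the paper uses $\{0\}\times[0,1]$ versus $\{(1,0)\}$); your intersect-and-distribute step for $\mathcal{V}^{(p)}_i$ is equivalent to the paper's common refinement over all of $\mathbb{H}_{y'}$.

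Two small repairs are needed. First, your curve $x_1^2+x_2^2=(x_1-1)^2$ arises where only the \emph{face} $x_1=1$ of $\mathcal{H}_j$ is active, i.e.\ $x_1\le 1\le x_2\le M$, giving the arc $x_1=(1-x_2^2)/2$; it does not arise where the corner is active, which gives the line $x_1+x_2=1$. Second, the case $q_C\equiv 0$, which the paper ignores, cannot occur on a full-dimensional cell when the two rectangles are non-degenerate with disjoint interiors: $q_C$ is a sum of univariate terms, and the term in a separating coordinate ($b^i_k\le a^j_k$) is non-constant, so no liberal reading of ``contained in'' is needed.
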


A proof of Proposition~\ref{prop:cf_maps_structure} is given in Appendix~\ref{structure_proof}. Figure~\ref{fig:illustrative_example_extended} in Appendix~\ref{app:illustrative_example_extended} illustrates the different counterfactual map boundaries under the $L_1$, $L_2$ and $L_\infty$ norms. Given the observed nonlinearities in general cases, we opted to represent counterfactual maps implicitly via certified nearest-region search rather than explicitly constructing the Voronoi decomposition. Notably, the KD-tree branch-and-bound procedure introduced in Section~\ref{sec:Metho} applies generally to any $L_p$ distance.

\section{Methodology}
\label{sec:Metho}
\begin{figure*}[!t]
\centering
\resizebox{0.9\linewidth}{!}{%
\begin{tikzpicture}[
    font=\footnotesize,
    node distance=10mm and 14mm,
    stage/.style={rectangle, rounded corners, draw, very thick, align=left, inner sep=8pt, fill=gray!5},
    edge/.style={->, very thick},
    tinytitle/.style={font=\scriptsize\bfseries, text=black!70},
    note/.style={font=\footnotesize, text=black!60},
    panelRed/.style={rectangle, rounded corners, draw, very thick, fill=red!5, inner sep=8mm},
    panelBlue/.style={rectangle, rounded corners, draw, very thick, fill=blue!5, inner sep=8mm}
]
\def\PanelW{198mm}
\def\PanelH{32mm}
\def\VGap{2mm}

\node[panelRed, minimum width=\PanelW, minimum height=\PanelH] (build) {};
\node[font=\bfseries] at ($(build.north)+(0,-4mm)$) {Build pipeline (Algorithm~\ref{alg:build})};
\node[note] at ($(build.north)+(0,-8mm)$) {(one-time)};

\coordinate (bwest) at ($(build.west)+(5mm,-4mm)$);
\coordinate (inputT) at (bwest);

\node[stage, right=25mm of inputT, minimum width=52mm] (ba) {\textbf{Rectangles extraction}\\
BA-Trees~\citep{vidal2020bornagain}\vspace{2pt}\\
$\mathbb{H}=\{\mathcal{H}_1,\dots,\mathcal{H}_n\}$ disjoint, labelled};

\node[stage, right=23mm of ba, minimum width=50mm] (exact) {%
\textbf{Exact search structures}\vspace{2pt}\\
(KD-trees)};

\coordinate (outL) at ($(exact.east)+(32mm,0)$);

\draw[edge] (inputT) -- node[above]{\textbf{Tree ensemble}}
      node[below]{$\mathcal T : \mathcal X \to \mathcal Y$} (ba);
\draw[edge] (ba) -- node[above]{Rectangles} node[below]{$\mathbb{H}$} (exact);
\draw[edge] (exact.east) -- node[above,align=center]{\textbf{Counterfactual Maps}}
      node[below,align=center]{$\hat f_{y'}: \mathcal{X}\to \mathbb{H}$ \\ for each target class $y'$} (outL);

\node[panelBlue, minimum width=\PanelW, minimum height=\PanelH - 5mm ,
      below=\VGap of build] (query) {};
\node[font=\bfseries] at ($(query.north)+(0,-4mm)$) {Query pipeline (Algorithm~\ref{alg:query})};

\coordinate (qwest) at ($(query.west)+(5mm,-4mm)$);
\coordinate (xinp) at (qwest);

\node[stage, right=34mm of xinp, minimum width=48mm] (nearest) {\textbf{Nearest rectangle}\\
$\mathcal{H}^\star=\hat f_{y'}( x)$};

\node[stage, right=23mm of nearest, minimum width=48mm] (clamp) {\textbf{Projection}\\
$ x_{\mathrm{cf}}=\Pi_{\mathcal{H}^\star}( x)$};

\coordinate (outcf) at ($(clamp.east)+(34mm,0)$);

\draw[edge] (xinp) -- node[above, align=center]{\textbf{Query}, \\ \textbf{Target class}}
      node[below, align=center]{$ x\in\mathcal{X}$, \\ $y' \in \mathcal{Y}$} (nearest);
\draw[edge] (nearest) -- node[above]{$\mathcal{H}^\star$} (clamp);
\draw[edge] (clamp) -- node[above]{\textbf{Counterfactual}}
      node[below]{$x_{\mathrm{cf}}$} (outcf);

\end{tikzpicture}%
}
\caption{Two-stage workflow for counterfactual generation in tree-ensemble decision spaces.
The \emph{Build pipeline} (one-time) extracts the labeled rectangular partition $\mathbb{H}$
and constructs an exact counterfactual map using geometric search structures.
The \emph{Query pipeline} identifies the nearest opposing region and projects the query to
produce a globally optimal counterfactual explanation.}
\label{fig:pipeline_build_query}
\end{figure*}
We introduce a two-stage preprocess-and-query pipeline, illustrated in Figure~\ref{fig:pipeline_build_query}. The preprocessing stage is executed once per model and constructs the counterfactual maps, i.e., geometric indices over the ensemble decision regions, one for each possible target class. Subsequently, the query stage uses the appropriate map to answer targeted counterfactual requests with low latency by locating and projecting onto the nearest region of the desired class.

\paragraph{Constructing the Counterfactual Map.}

Constructing a counterfactual map requires a geometric representation of the ensemble's decision function that is both \emph{complete}, i.e., covering the entire feature space, and \emph{faithful}, in the sense that it preserves the original class assignment on every region. Violations of either property would lead to counterfactual explanations that are incorrect or suboptimal.
Any tree ensemble admits such a representation as a partition of the feature space into disjoint, labeled axis-aligned hyperrectangles, as defined in
Section~\ref{sec:count-maps}. 
While a naïve construction could be obtained by intersecting all split thresholds across the ensemble (giving an intractable, exponentially large partition), we rely instead on the born-again tree framework of \citet{vidal2020bornagain} (BA-Trees) as an extraction mechanism, since it produces a compact region-based representation that is globally faithful to the ensemble’s predictions, i.e., it encodes the same decision function.
That work proposes two exact algorithms: a dynamic programming approach that yields a minimal partition in terms of the number of hyperrectangles, at the cost of substantial computational and memory overhead; and a faster procedure that preserves faithfulness while relaxing minimality. As minimality has only a moderate impact on query-time complexity, we rely on the latter to limit preprocessing cost.

The resulting labeled hyperrectangles are then indexed to form the counterfactual maps. For each target class $y'$, we build a separate volumetric KD-tree over the subset $\mathbb{H}_{y'}$, yielding a hierarchical spatial index that supports efficient nearest-region queries. This preprocessing is performed once per model and amortized across all subsequent counterfactual queries, in contrast to exact counterfactual methods based on mixed-integer programming, which solve a global optimization problem for each query. Algorithm~\ref{alg:build} summarizes the preprocessing pipeline.

\begin{center}
\begin{minipage}{0.95\textwidth}
\begin{algorithm}[H]
\small
\caption{Preprocessing the Counterfactual Maps}
\label{alg:build}
\KwRequire{Trained tree ensemble $\mathcal{T}$}
\KwEnsure{One KD-tree per target class $y'$}

Extract the rectangular partition $\mathbb{H}$ from $\mathcal{T}$ using BA-Trees~\citep{vidal2020bornagain}\;
For each $\mathcal{H}_i \in \mathbb{H}$, store its bounds and class label\;

\For{each class $y' \in \mathcal{Y}$}{
    Build a volumetric KD-tree whose leaves are the rectangles in $\mathbb{H}_{y'}$, with each internal node storing the bounding box of all descendant leaves\;
}
\end{algorithm}
\end{minipage}
\end{center}

\paragraph{Indexing the Map for Dynamic Queries.}
Once constructed, the counterfactual maps act as geometric oracles to support exact nearest-region queries. Given a query point $x$ and a target label $y'$, the task is to identify the region in $\mathbb{H}_{y'}$ minimizing the point-rectangle distance $d_p$, and to return the corresponding counterfactual by projection.
A naive evaluation of all regions in $\mathbb{H}_{y'}$ would require linear time in the size of the partition and is therefore impractical. Instead, we perform certified nearest–region search using a branch-and-bound traversal of the KD-tree associated with the target class $y'$. This procedure relies on admissible lower bounds on the distance from the query point to all hyperrectangles contained in a subtree, allowing entire subtrees to be pruned without sacrificing optimality. In practice, this yields sublinear average query time, as only a small fraction of nodes is typically visited.

\begin{center}
\begin{minipage}{0.95\textwidth}
\begin{algorithm}[H]
\small
\caption{Exact Counterfactual Query (Targeted)}
\label{alg:query}
\KwRequire{Query point $x$, KD-tree over $\mathbb{H}_{y'}$, distance $d_p$}
\KwEnsure{Optimal counterfactual $x_{\mathrm{cf}}$}

Initialize best distance, region \& min-priority queue 
$(d^\star,\mathcal{H}^\star,Q) \gets (\infty,\varnothing,\{\mathrm{root}\})$\;

\While{$Q \neq \varnothing$}{
    Pop a node $v$ from $Q$ with smallest lower bound $d_p(x,B(v))$\;

    \If{$d_p(x,B(v)) \ge d^\star$}{
        \textbf{break}\;
    }

    \If{$v$ is a leaf}{
        $(d^\star,\mathcal{H}^\star) \gets 
        \min\bigl((d^\star,\mathcal{H}^\star),
        \{(d_p(x,\mathcal{H}),\mathcal{H}) : \mathcal{H}\in\mathbb{H}_{y'}(v)\}\bigr)$\;
    }
    \Else{
        $Q \gets Q \cup \{c : c\in\mathrm{children}(v),\ d_p(x,B(c))<d^\star\}$\;
    }
}

\Return{$\Pi_{\mathcal{H}^\star}(x)$}\;
\end{algorithm}
\end{minipage}
\end{center}

Algorithm~\ref{alg:query} details the query procedure.  It maintains a priority queue of KD-tree nodes ordered by their lower bounds. At each iteration, the node with the smallest bound is explored; the search terminates whenever this bound exceeds the best distance found so far. When a leaf node is reached, distances to the contained hyperrectangles of class $y'$ are evaluated explicitly, and the current best solution is updated. Once the nearest target region $\mathcal{H}^\star$ is identified, the counterfactual is obtained by projection, $x_{\mathrm{cf}}=\Pi_{\mathcal{H}^\star}(x)$.
The following result establishes the correctness of the~procedure.

\begin{theorem}[Exact Nearest-Hyperrectangle Search]
\label{thm:kd_exact}
Let $\mathbb{H}_{y'}$ be the set of axis-aligned hyperrectangles associated with a target label $y'$, and let $d_p$ be an $L_p$ distance with $1\le p\le\infty$. Given a query $x\in\mathbb{R}^m$, Algorithm~\ref{alg:query} returns a hyperrectangle $
\mathcal{H}^\star \in \smash{\arg\min_{\mathcal{H}\in\mathbb{H}_{y'}}} d_p(x,\mathcal{H})$, and the projection $x_{\mathrm{cf}}=\Pi_{\mathcal{H}^\star}(x)$ is a globally optimal counterfactual explanation targeting class $y'$ under $d_p$.
\end{theorem}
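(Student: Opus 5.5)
The proof is a standard branch-and-bound correctness argument with three ingredients: admissibility of the node lower bounds, a loop invariant for the priority queue, and a final step linking the nearest rectangle to the optimal counterfactual.

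\textbf{Admissibility.} For each KD-tree node $v$, let $\mathbb{H}_{y'}(v)$ be the rectangles stored in the leaves below $v$. By Algorithm~\ref{alg:build}, $B(v)$ is their bounding box, so $\mathcal{H}\subseteq B(v)$ for every $\mathcal{H}\in\mathbb{H}_{y'}(v)$. Since $d_p(x,\cdot)$ is an infimum over the set, it is monotone under inclusion. Hence
\[
d_p(x,B(v))\le \min_{\mathcal{H}\in\mathbb{H}_{y'}(v)} d_p(x,\mathcal{H}).
\]
The same holds along each root-to-leaf path: $B(c)\subseteq B(v)$ for every child $c$, so bounds are nondecreasing going down the tree. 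This uses only that $d_p$ is a norm-induced distance, so it covers all $p\in[1,\infty]$ and weighted norms.

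\textbf{Invariant.} At the start of each iteration, every rectangle $\mathcal{H}\in\mathbb{H}_{y'}$ satisfies one of two conditions:
\begin{itemize}
\item[(a)] $d_p(x,\mathcal{H})\ge d^\star$; or
\item[(b)] $\mathcal{H}$ lies below some node currently in $Q$.
\end{itemize}
Initially (b) holds for all rectangles via the root. When a leaf is popped and processed, its rectangles are evaluated and $d^\star$ becomes at most each of their distances, so they move to (a). Decreasing $d^\star$ never invalidates (a), since (a) only requires $d^\star$ to be small. When an internal node is expanded, each rectangle below it lies below some child $c$. If $c$ is pushed, (b) holds. If $c$ is discarded because $d_p(x,B(c))\ge d^\star$, admissibility gives (a).

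\textbf{Termination.} The loop terminates because each node enters $Q$ at most once and the tree is finite. At exit, either $Q=\varnothing$, or the popped node has the smallest lower bound in $Q$ and that bound is $\ge d^\star$. In the second case, admissibility implies every rectangle below any remaining queue node has distance $\ge d^\star$. Either way, every rectangle satisfies (a), so $d^\star\le\min_{\mathcal{H}\in\mathbb{H}_{y'}} d_p(x,\mathcal{H})$. The stored $\mathcal{H}^\star$ attains $d^\star$, which gives $\mathcal{H}^\star\in\arg\min$.

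Nonemptiness of $\mathcal{H}^\star$ needs a separate check. Because $\mathbb{H}_{y'}\neq\emptyset$ and every $B(v)$ is nonempty, all bounds are finite. While $d^\star=\infty$, the break and pruning tests therefore never fire, and some leaf must be processed before $Q$ empties. I will also assume ties at the break (bound equal to $d^\star$) are fine: they only discard regions that are no better than the incumbent.

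\textbf{Optimal counterfactual.} Let $\mathcal{H}^\star$ be a minimizer. Then
\[
\|x-\Pi_{\mathcal{H}^\star}(x)\|_p=d_p(x,\mathcal{H}^\star)\le d_p(x,\mathcal{H})\le\|x-x'\|_p
\]
for any $x'$ with $\mathcal{T}(x')=y'$, where $\mathcal{H}$ is the region containing $x'$. This uses that $\mathbb{H}$ is a faithful partition of $\mathcal{X}$. The projection $\Pi_{\mathcal{H}^\star}(x)$ exists because rectangles are closed, or as a limit point in the half-open case. Its label is $y'$ by faithfulness, so it is a globally optimal counterfactual.

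The main obstacle is the invariant bookkeeping, specifically the early break being justified by the queue being ordered by lower bound. I would write that step carefully and treat the rest as routine.
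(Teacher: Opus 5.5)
Your proof is correct and is essentially the paper's argument: admissibility of $d_p(x,B(v))$ via monotonicity of the point-to-set distance under inclusion, a coverage invariant (every rectangle is either below a queued node or no closer than $d^\star$) maintained through leaf evaluation and pruning, and the same two termination cases, with the break case settled by the min-key ordering of $Q$. Your explicit termination argument, your check that $\mathcal{H}^\star$ is actually assigned, and the final faithfulness step fill in points the paper leaves implicit; however, drop the half-open aside, since there the clamped limit point can lie in a neighboring region with a different label (the infimum is simply not attained), and rest the last step on the closed convention $\mathcal{H}=[a,b]$ that the paper's proof also adopts.
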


A detailed proof of Theorem~\ref{thm:kd_exact} is provided in Appendix~\ref{kd_exact_proof}. It relies on the fact that the branch-and-bound traversal maintains valid lower bounds on the distance to all unexplored regions, ensuring that no region that could improve the current best solution is ever pruned. Termination occurs only when no unexplored subtree can contain a closer target region, guaranteeing global optimality.

Finally, CF-Maps can handle additional desiderata on counterfactuals, such as actionability (e.g., structural constraints, monotonicity, or immutability of certain features) and plausibility (closeness to the data manifold) at two complementary stages:
(i) during preprocessing, by excluding regions that violate those constraints or are implausible with respect to the data manifold, and 
(ii) at query time, by using a user-specified distance function in the branch-and-bound traversal to encode user-specific actionability costs, monotonicity preferences, or prohibitive penalties for immutable features, without rebuilding the KD-trees.

\section{Experimental Analyses}\label{sec:experiments}

We evaluate the proposed counterfactual maps algorithm (\textbf{CF-Maps}) on a diverse set of tabular classification benchmarks and compare it against state-of-the-art exact and heuristic counterfactual methods for random forests. Our experiments address two main questions: (i) to what extent can preprocessing amortize the cost of exact counterfactuals to achieve interactive query latency, and (ii) how does CF-Maps compare to existing exact and heuristic baselines in terms of runtime and counterfactual quality.
\begin{wraptable}{r}{0.49\textwidth}
\vspace{0.2cm}
\caption{Characteristics of the datasets used in our experiments: number of samples, features, and sample-class distribution. Additional details are provided in Appendix~\ref{app:datasets_detailed}.}
\small
\setlength{\tabcolsep}{2.5pt}
\renewcommand{\arraystretch}{1.12}
\label{tab:datasets}
\scalebox{0.9}{
\begin{tabular}{lrrrr}
\toprule
\textbf{Dataset} & \textbf{\#Samples} & \textbf{\#Features} & \textbf{Classes} \\
\midrule
BC: Breast-Cancer & 683 & 9 & 65-35 \\
CP: COMPAS & 6907 & 13 & 54-46 \\
FI: FICO & 10459 & 23 & 52-48 \\
PD: Pima-Diabetes & 768 & 8 & 65-35 \\
SE: Seeds & 210 & 7 & 33-33-33 \\
\bottomrule
\end{tabular}
}
\vspace{-1.2cm}
\end{wraptable}

\subsection{Experimental setup}\label{subsec:exp_setup}

\textbf{Datasets.} We consider five tabular datasets, summarized in Table~\ref{tab:datasets}, covering numerical, categorical, ordinal, and binary features. Four of them are commonly used in trustworthy machine learning and are associated with high-stakes tasks: recidivism prediction (CP), credit risk assessment (FI), and medical prognosis or disease prevalence prediction (BC and PD). In these settings, system audits, explanation quality, and reliable recourse are particularly important. We also include the classical Seeds dataset (SE) to evaluate the method in a multiclass setting. To ensure consistency and reproducibility, we follow the preprocessing pipeline of \citet{vidal2020bornagain} for feature scaling and categorical-variable encoding.

\textbf{Models.} On each dataset, we train random forests using scikit-learn~\citep{scikit-learn}, varying the number of trees (5, 10, 20, 50, or 100) and their maximum depth (2, 3, 4, 5, 6, or~7). We repeat all experiments over five random seeds. Each dataset is split into $80\%$ for training and $20\%$ for testing. For each trained forest, we generate counterfactual explanations for a total of $1000$ query points, drawn uniformly from the test set and supplemented, if necessary, with uniformly sampled inputs.

\textbf{Baselines.}
We compare the proposed \textbf{CF-Maps} against four baselines covering both exact and heuristic counterfactual generation for random forests: \textbf{OCEAN} \citep{parmentier2021optimal} is the current state-of-the-art for generating optimal counterfactual explanations for tree ensembles, using mixed-integer optimization. We use the reference implementation \texttt{oceanpy} from PyPI\footnote{\url{https://pypi.org/project/oceanpy/}} with its default solver settings. \textbf{Feature Tweaking (FT)} \citep{Tolomei_2017} is a tree-specific heuristic that perturbs decision paths to flip predictions. We use the public implementation from the repository \texttt{featureTweakPy}\footnote{\url{https://github.com/upura/featureTweakPy}}. \textbf{LIRE} \citep{carreira2023LIRE} is a heuristic method that restricts the search space to regions supported by the training data, also known as live regions. Since, to our knowledge, no official implementation is available, we reproduced LIRE as described in its paper. \textbf{DiCE} \citep{Mothilal_2020} is a gradient-free, sampling-based method designed to generate diverse counterfactual explanations. Unlike tree-specific methods, it does not exploit the ensemble's internal combinatorial structure and therefore serves as a generic baseline. We use the public implementation in the \texttt{dice-ml} package.

We further discuss these baselines and position our approach with respect to related work in Section~\ref{sec:related_works}.

\textbf{Metrics.} Since CF-Maps and OCEAN return globally optimal counterfactuals by construction, standard quality metrics such as validity and distance are not discriminative between the two methods: validity is always satisfied, and distance equals the optimum under the chosen norm. We therefore focus on preprocessing time, query latency, and the validity and distance suboptimality of heuristic baselines. We also provide qualitative counterfactual examples in Appendix~\ref{sec:examples_cfs}.

\textbf{Setup.} All experiments are run on a computing cluster over a set of homogeneous nodes equipped with AMD EPYC 7532 (Zen 2) @2.40GHz CPU with 64GB of RAM. All materials (source code and datasets) required to reproduce our experiments will be released publicly upon acceptance as a Python library under an MIT license.

\subsection{Experimental Results}
\label{sec:results}

\begin{wrapfigure}{r}{0.5\textwidth}
    \vspace{-1cm}
    \centering
    \includegraphics[width=0.95\linewidth]{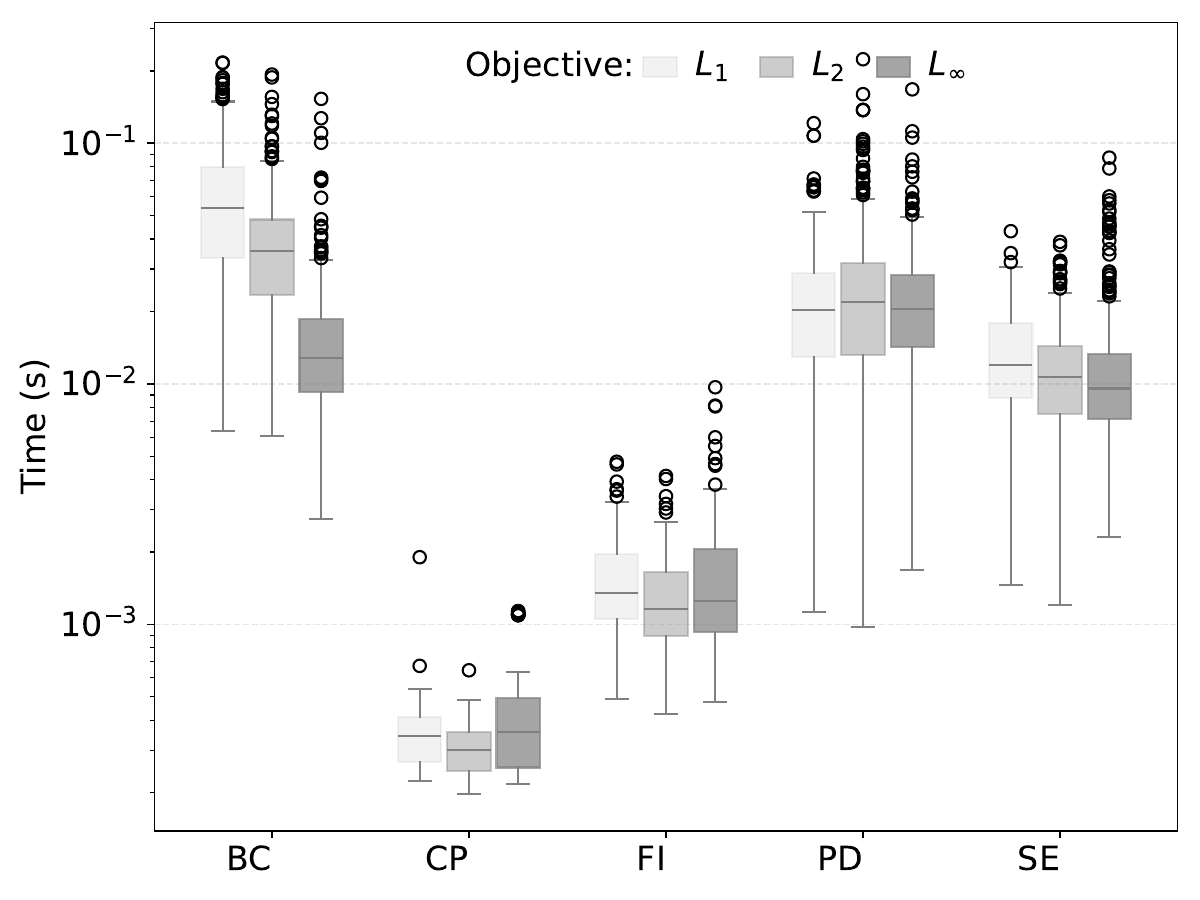}
    \caption{Average query latency of \textbf{CF-Maps} after the one-time preprocessing step (i.e., time to generate a counterfactual explanation using Algorithm~\ref{alg:query}), for three distance functions across all datasets, for random forests with 100 depth-5 trees.}
    \label{fig:boxplot_all_norms}
    \vspace{-0.5cm}
\end{wrapfigure}

\textbf{Result 1. CF-Maps achieves millisecond-to-subsecond query times suitable for interactive use while delivering optimal counterfactuals.} 
Figure~\ref{fig:boxplot_all_norms} reports the average time required by CF-Maps to generate each counterfactual explanation for a standard random forest configuration of 100 depth-5 trees, averaged over five random seeds and 1000 queries, after the one-time preprocessing step. 
Across datasets, CF-Maps returns optimal counterfactuals within fractions of a second, ranging from a few milliseconds on CP to a few tens of milliseconds on PD. Variability across datasets is primarily driven by dimensionality (number of features), but also by feature type: numerical attributes typically induce many distinct split thresholds, leading to a partition with more hyperrectangles in $\mathbb{H}$ and a deeper volumetric KD-tree.

\textbf{Result 2. CF-Maps adapts effectively to different norms.} Figure~\ref{fig:boxplot_all_norms} demonstrates that CF-Maps can efficiently handle (and therefore provide optimal solutions for) all the considered distances ($L_1$, $L_2$, and $L_\infty$) while maintaining interactive query latency. Query times are consistently slightly higher under $L_1$ and lower under $L_\infty$, with $L_2$ in between. This reflects the pruning efficiency of the KD-tree branch-and-bound search: for axis-aligned hyperrectangles, distance lower bounds tend to be tighter under $L_\infty$.

\begin{table*}[t!]
\centering
\caption{Preprocessing time ($T_P$) and total time to generate 1000 counterfactuals ($T_{1000}$), for random forests with 100 depth-5 trees. We also indicate the proportion \%F of failed queries for which no valid counterfactual is found, and the distance ratio $D$  relative to the optimal distance (computed over valid counterfactuals only).
Best results from optimal counterfactual methods are highlighted in \textbf{bold}.}\label{tab:numcf_vs_time_table}
\renewcommand{\arraystretch}{1.12}
\setlength{\tabcolsep}{4pt}
\resizebox{\textwidth}{!}{
\begin{tabular}{lrrrr@{\hspace*{0.2cm}}|@{\hspace*{0.2cm}}rrrr@{\hspace*{0.2cm}}||@{\hspace*{0.2cm}}rrrr@{\hspace*{0.2cm}}|@{\hspace*{0.2cm}}rrr@{\hspace*{0.2cm}}|@{\hspace*{0.2cm}}rrr}
\toprule
\textbf{Dataset} \hspace*{-0.3cm} & \multicolumn{4}{c}{OCEAN} & \multicolumn{4}{c}{CF-Maps} & \multicolumn{4}{c}{LIRE} & \multicolumn{3}{c}{FT} & \multicolumn{3}{c}{DiCE} \\
 & $T_P$ & $T_{1000}$ & \%F & $D$ & $T_P$ & $T_{1000}$ & \%F & $D$ & $T_P$ & $T_{1000}$ & \%F & $D$ & $T_{1000}$ & \%F & $D$ & $T_{1000}$ & \%F & $D$ \\
\midrule
BC & 3.0 & 1142.9 & 0.0 & 1.0 & 556.7 & \textbf{45.2} & 0.0 & 1.0 & 3.9 & 1.3 & 0.0 & 2.3 & 106.9 & 93.2 & 1.6 & 1950.8 & 2.2 & 2.8 \\
CP & 5.0 & 618.7 & 0.0 & 1.0 & 0.2 & \textbf{1.4} & 0.0 & 1.0 & 2.0 & 1.2 & 0.0 & 1.1 & 239.4 & 41.4 & 1.0 & 1482.8 & 15.1 & 1.0 \\
FI & 5.9 & 1273.0 & 0.0 & 1.0 & 0.9 & \textbf{17.6} & 0.0 & 1.0 & 33.0 & 23.6 & 0.0 & 1.3 & 731.9 & 63.0 & 1.0 & 1884.7 & 16.6 & 1.3 \\
PD & 13.9 & 42983.2 & 0.0 & 1.0 & 467.5 & \textbf{32.1} & 0.0 & 1.0 & 15.9 & 2.4 & 0.0 & 2.4 & 177.6 & 79.3 & 3.2 & 1278.4 & 1.0 & 4.4 \\
SE & 2.9 & 5068.6 & 0.0 & 1.0 & 120.1 & \textbf{8.5} & 0.0 & 1.0 & 2.2 & 0.5 & 0.0 & 3.4 & 55.6 & 78.5 & 1.9 & 866.1 & 1.0 & 4.8 \\
\bottomrule
\end{tabular}
}\vspace*{0.35cm}
\end{table*}
\begin{figure*}[t!]
    \centering
    \includegraphics[width=\linewidth]{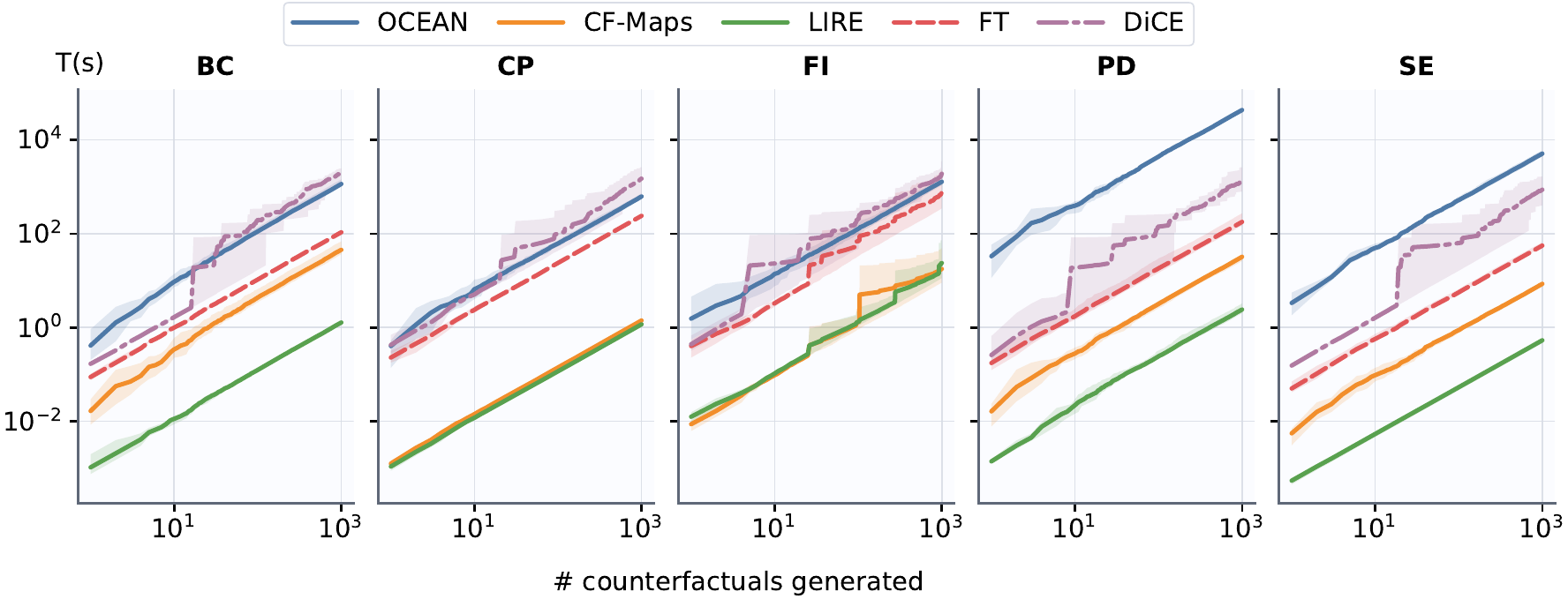}
\caption{Cumulative query time as a function of the number of generated counterfactuals, for random forests with 100 depth-5 trees. The shaded area shows the range between the fastest and slowest runs over the five random seeds.} \label{fig:numcf_vs_time}
\end{figure*}

\textbf{Result 3. CF-Maps is orders of magnitude faster than the state-of-the-art for generating optimal counterfactual explanations, while heuristic baselines are suboptimal and often slower.}
Table~\ref{tab:numcf_vs_time_table} compares CF-Maps with existing exact and heuristic baselines across all datasets for random forests with 100 depth-5 trees. We report the preprocessing time $T_P$, generation time for 1000 counterfactuals $T_{1000}$, and the suboptimality (distance ratio $D$ relative to the optimum) and validity (percentage of failures \%F) of the methods. Note that optimal methods such as OCEAN and CF-Maps always guarantee validity (\%F $=0$) and optimality ($D=1.0$). LIRE also guarantees validity (\%F $=0$) by design.
Additional details on the KD-Trees produced during CF-Maps preprocessing, including their size and average number of nodes visited per query, are reported in Table~\ref{tab:kd-tree} in Appendix~\ref{sec:app_kdtree}.

As shown in the table, heuristic methods incur substantially higher recourse costs, with average distances ratio reaching up to $3.4\times$ for LIRE and $3.2\times$ for FT, and $4.8\times$ for DiCE relative to the optimum. Moreover, the errors of DiCE and FT are computed only over queries for which a valid counterfactual is found; for the realistic-size random forests considered here, FT fails to return any valid counterfactual for most queries (from roughly 41\% on CP to 93\% on BC), and DiCE also fails on a non-negligible subset of cases, with failure rates up to 16.6\% on FI. In contrast, OCEAN consistently returns optimal counterfactuals, but its runtime grows markedly with the number of queries, as each explanation requires solving a new optimization problem. 

CF-Maps avoids both limitations: it delivers globally optimal counterfactuals while achieving query latency comparable to LIRE and orders of magnitude lower than OCEAN (and often DiCE). Figure~\ref{fig:numcf_vs_time} further illustrates the same trend. 

\begin{figure*}[t!]
    \centering
    \includegraphics[width=\linewidth]{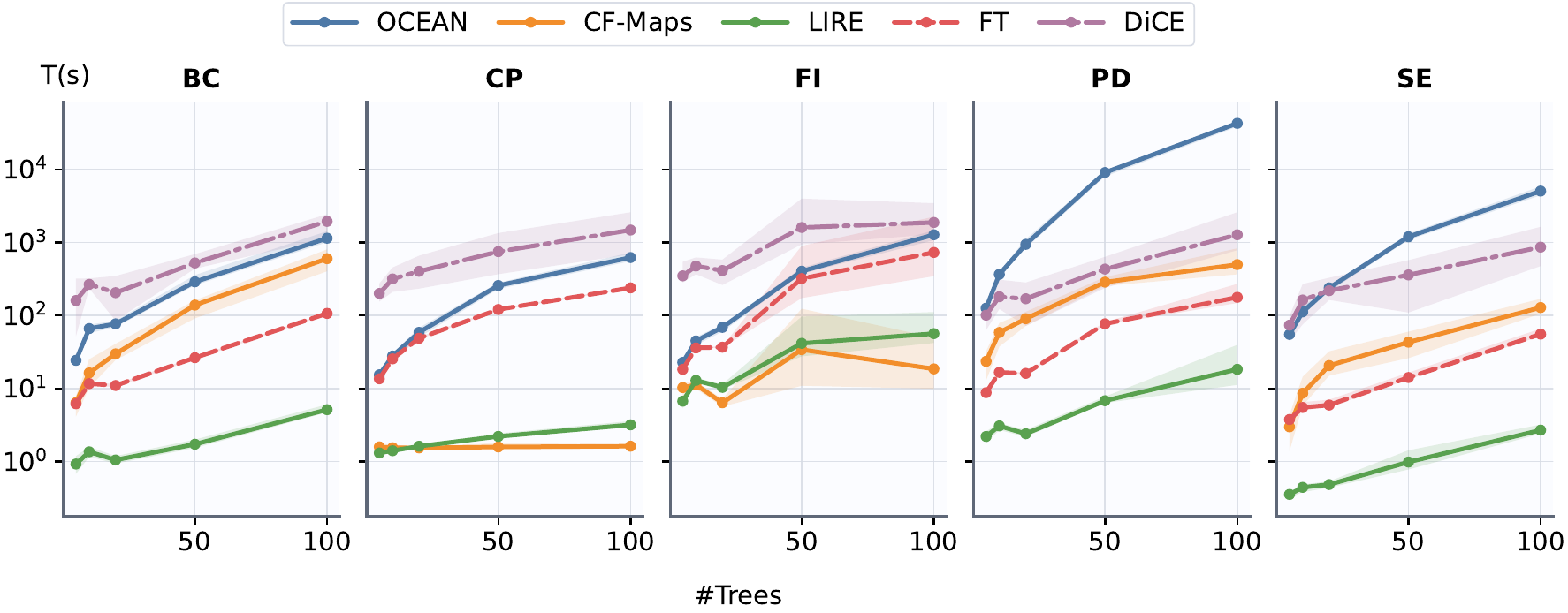}
\caption{Total time (including preprocessing) for generating 1000 counterfactuals, for random forests with varying numbers of depth-5 trees. The shaded area shows the range between the fastest and slowest runs over the five random seeds} \label{fig:time_vs_estimators}
\vspace{-0.3cm}
\end{figure*}

\textbf{Result 4. CF-Maps scales to larger and deeper forests while retaining optimality, whereas exact and heuristic baselines degrade in runtime or solution quality.}
Figure~\ref{fig:time_vs_estimators} reports the total time required to generate 1000 counterfactuals for random forests with an increasing number of depth-5 trees. As the number of trees grows, the runtimes of OCEAN increase substantially, reflecting the fact that OCEAN solves a new optimization problem per query. 
A similar pattern is observed in Figure~\ref{fig:time_vs_depth} in Appendix~\ref{app:complementary_results}, where the depth of the trees is varied for a random forest with 100 trees: OCEAN slows down markedly as the trees become deeper, while CF-Maps consistently preserves interactive query latency. 
Among the heuristic baselines, LIRE scales well in terms of runtime because its search is restricted to live regions, making its computational cost depend mainly on the training set size rather than on the number or depth of trees. However, this computational advantage comes at the cost of solution quality. As shown in Tables~\ref{tab:ft_lire_vs_cfmaps_nb_trees} and~\ref{tab:ft_lire_vs_cfmaps_depth} in Appendix~\ref{app:complementary_results}, heuristic methods deteriorate as forests become larger or deeper. FT increasingly fails to find valid counterfactuals (returning invalid explanations in up to 94.4\% of the queries for the BC dataset), while LIRE, although it returns valid counterfactuals by design, produces solutions that become increasingly suboptimal. DiCE, being model-agnostic, is less directly affected by the size of the forest, but it still fails to return valid counterfactuals for a non-negligible fraction of queries and often yields larger relative distance gaps than LIRE.
Overall, CF-Maps offers a favorable trade-off between guarantees and scalability: like OCEAN, it guarantees validity and optimality, but avoids the sharp runtime degradation of per-query optimization, while remaining competitive with the fastest heuristic baselines in query~time.

\section{Related Work}
\label{sec:related_works}

Counterfactual explanations are a cornerstone of trustworthy machine learning, providing intuitive explanations of model predictions. We review prior work on counterfactual explanations for tree ensembles, distinguishing heuristic and exact methods, before discussing amortized inference. For broader surveys, we refer the reader to \citet{karimi2022survey}, \citet{guidotti2024counterfactual}, and \citet{verma2024counterfactual}.

Early counterfactual methods were largely model-agnostic~\citep{DBLP:journals/corr/abs-1711-00399}, whereas \emph{white-box} approaches tailored to specific hypothesis classes can exploit model structure to produce higher-quality explanations. For tree ensembles, many methods are \emph{heuristic}: they provide no guarantees on optimality and may even fail to find a feasible counterfactual. A widely used tree-specific baseline is Feature Tweaking (FT)~\citep{Tolomei_2017}, which explores alternative root-to-leaf paths in individual trees to propose feature changes that may flip the ensemble prediction. Since it does not reason jointly over all trees, FT can be suboptimal or invalid.
Other heuristics use differentiable approximations of tree splits~\citep{lucic2022focus}, restrict the search to \emph{live regions} supported by training data~\citep{carreira2023LIRE}, or approximate the ensemble with a single surrogate tree~\citep{fernandez2020random}. More recently, EECE~\citep{zhang2025efficient} improves FT by prioritizing candidate modifications using the ensemble structure and combining them with live regions to ensure feasibility.

These methods are often computationally efficient, but their lack of guarantees is not merely a technical issue. In recourse settings, heuristic counterfactuals may overstate the effort required, miss feasible alternatives, or yield uneven explanation quality across individuals, thereby introducing or amplifying unfairness. This motivates exact methods that provide globally optimal and comparable recourse costs. Several works pursue this goal for tree ensembles using mathematical programming or SAT/SMT formulations~\citep{cui2015,ijcai2020p395,karimi2020modelagnostic,parmentier2021optimal}. OCEAN~\citep{parmentier2021optimal} is the current state of the art in this category, relying on tight linear relaxations and flow-based encodings to compute optimal counterfactuals within seconds for moderately sized forests. Beyond mathematical programming, \citet{blanchart2021exact} builds a query-specific search tree over candidate regions obtained by intersecting split conditions, and prunes it by branch-and-bound. Despite their guarantees, these exact methods solve a new optimization problem for each query, limiting their use in interactive settings.

Finally, a smaller body of work studies amortized counterfactual inference, where upfront preprocessing enables fast query-time generation. These approaches either rely on differentiable models to exploit gradient information~\citep{mahajan2019preserving,van2021conditional,yang2021model,guo2021counternet}, or are model-agnostic and thus cannot exploit the internal structure of a given tree ensemble. The latter category includes GAN-based methods~\citep{nemirovsky2022countergan} as well as formulations based on Markov decision processes~\citep{verma2022amortized} or reinforcement learning~\citep{samoilescu2021model}. In a different vein, using only black-box access, \citet{becker2021step} constructs a hierarchical partition of the input space by iteratively sampling within regions (starting from the full space), subdividing regions with inconsistent predictions, and then leveraging this hierarchy to accelerate counterfactual search at query time. Finally, \citet{rawal2020beyond} precomputes a covering set of counterfactual examples that provides broad query coverage, but may be suboptimal for a given instance.

To the best of our knowledge, our method is the first to provide optimal counterfactual explanations for tree ensembles with amortized inference, by leveraging a counterfactual map representation that supports interactive use.

\section{Conclusions}

We have introduced the concept of \emph{counterfactual maps}, which provide a direct, query-conditioned view of recourse. Beyond pointwise explanations, counterfactual maps constitute a form of global explanation: they make explicit the geometric organization of decision regions in the input space and enable interactive exploration of how recourse varies across queries.

To implement this approach, we rely on a preprocessing step to construct a volumetric KD-tree for each target class. At query time, this data structure identifies the closest decision region of the desired class, and projecting the query onto that region directly yields an optimal counterfactual. Across datasets with numerical, binary, and categorical features, and under different objective functions, CF-Maps achieves sublinear average query time, often in the millisecond range, while preserving optimality. In contrast,
heuristic baselines (FT, LIRE and DiCE) substantially overestimate minimal recourse costs (by factors up to $4.8$) or frequently fail to find feasible counterfactuals when they exist (up to a 93\% failure rate for FT). OCEAN, the current state of the art for optimal counterfactual explanations, also preserves optimality but requires orders of magnitude more time to generate 1000 explanations.  Taken together, these results establish this methodology as a very promising solution for delivering trustworthy recourse in interactive, real-world deployments.

The research avenues stemming from this work are numerous.
On the algorithmic side, additional refinements could speed up the extraction of the tree ensemble partition at scale, which remains the dominant preprocessing bottleneck and limits scalability to large-scale cases. To further decrease preprocessing effort, one could also consider constructing lower- or upper-envelope approximations of the partition, enabling approximate recourse with provable guarantees, or designing gating mechanisms that handle most queries interactively while triggering exact cold-start optimization only in rare, difficult cases. Finally, extending counterfactual maps to other hypothesis classes is a compelling research avenue. A similar methodology, grounded in volumetric KD-trees, could be applied to any model whose decision function is piecewise constant over axis-aligned regions (e.g., rule-based models, or models operating on discretized features). Extending the approach beyond this setting might also be possible, though it would likely require different geometric data structures.

\bibliographystyle{plainnat}
\bibliography{references}

\appendix

\section{Proof of Proposition~\ref{prop:cf_maps_structure}}
\label{structure_proof}

\begin{proof}
Since $\mathbb{H}$ is finite, $\mathbb{H}_{y'}\subseteq\mathbb{H}$ is finite as well.
First, we highlight a closed form for $d_p(x,\mathcal{H})$.
Fix an axis-aligned hyperrectangle $\mathcal{H}=[a,b]\subseteq\mathbb{R}^m$ with $a\le b$. For $k=1,\ldots,m$, define
\[
\delta_k(x,\mathcal{H}) \triangleq \max\{0,\ a_k-x_k,\ x_k-b_k\},\qquad
\delta(x,\mathcal{H}) \triangleq (\delta_1(x,\mathcal{H}),\ldots,\delta_m(x,\mathcal{H})).
\]
Let $\Pi_{\mathcal{H}}(x)$ be the coordinate-wise clamp of $x$ to $[a,b]$, such that
$|x_k-(\Pi_{\mathcal{H}}(x))_k|=\delta_k(x,\mathcal{H})$ for all $k$. For any $p\in[1,\infty]$,
\begin{equation}
\label{eq:dp_delta_appendix}
d_p(x,\mathcal{H})=\inf_{x'\in\mathcal{H}}\|x-x'\|_p=\|x-\Pi_{\mathcal{H}}(x)\|_p=\|\delta(x,\mathcal{H})\|_p.
\end{equation}

\paragraph{Case $p\in\{1,\infty\}$.}
For each sign pattern $s\in\{-1,0,+1\}^m$, consider the polyhedron
\[
R_s(\mathcal{H}) \triangleq \bigl\{x:\ x_k\le a_k\ \text{if }s_k=-1;\ a_k\le x_k\le b_k\ \text{if }s_k=0;\ x_k\ge b_k\ \text{if }s_k=+1\bigr\}.
\]
On $R_s(\mathcal{H})$, each $\delta_k(\cdot,\mathcal{H})$ is affine. Hence, by~\eqref{eq:dp_delta_appendix},
\begin{itemize}
\item $d_1(\cdot,\mathcal{H})=\sum_k \delta_k(\cdot,\mathcal{H})$ is affine on $R_s(\mathcal{H})$;
\item $d_\infty(\cdot,\mathcal{H})=\max_k \delta_k(\cdot,\mathcal{H})$ is the maximum of finitely many affine functions on $R_s(\mathcal{H})$, and is therefore affine on a finite polyhedral refinement obtained by fixing an index attaining the maximum.
\end{itemize}
Consequently, for each $\mathcal{H}\in\mathbb{H}_{y'}$ and $p\in\{1,\infty\}$, there exists a finite polyhedral partition $\mathcal{P}(\mathcal{H})$ of $\mathbb{R}^m$ such that $d_p(\cdot,\mathcal{H})$ is affine on every cell of $\mathcal{P}(\mathcal{H})$.
Let
$
\smash{\mathcal{P}\triangleq \bigwedge_{\mathcal{H}\in\mathbb{H}_{y'}} \mathcal{P}(\mathcal{H})}
$
be the common refinement, i.e., the partition whose cells are all nonempty intersections of one cell from each partition $\mathcal{P}(\mathcal{H})$ with $\mathcal{H}\in\mathbb{H}_{y'}$. $\mathcal{P}$ is still a finite polyhedral partition, and for every $C\in\mathcal{P}$ and every $\mathcal{H}\in\mathbb{H}_{y'}$, the function $d_p(\cdot,\mathcal{H})$ is affine on $C$.

Fix $\mathcal{H}_i,\mathcal{H}_j\in\mathbb{H}_{y'}$. On each $C\in\mathcal{P}$, the bisector condition
$d_p(x,\mathcal{H}_i)=d_p(x,\mathcal{H}_j)$
is an equality of two affine functions, hence it defines either $C\cap H$ for some hyperplane $H$, or all of $C$ (when the affine functions coincide on $C$). Therefore
\[
B_{ij}^{(p)}=\{x:\ d_p(x,\mathcal{H}_i)=d_p(x,\mathcal{H}_j)\}
\]
is a finite union of polyhedra.

Similarly, for fixed $\mathcal{H}_i\in\mathbb{H}_{y'}$ and any $C\in\mathcal{P}$,
\[
\mathcal{V}_i^{(p)}\cap C
=
\{x\in C:\ d_p(x,\mathcal{H}_i)\le d_p(x,\mathcal{H}_j)\ \ \forall\,\mathcal{H}_j\in\mathbb{H}_{y'}\}
\]
is given by finitely many linear inequalities on $C$, hence is a polyhedron (possibly empty). Since $\mathcal{P}$ is finite, $\mathcal{V}_i^{(p)}$ is a finite union of polyhedra.

\paragraph{Case $p=2$.}
By~\eqref{eq:dp_delta_appendix},
\[
d_2(x,\mathcal{H})^2=\sum_{k=1}^m \delta_k(x,\mathcal{H})^2.
\]
On each $R_s(\mathcal{H})$, $\delta_k(\cdot,\mathcal{H})$ is affine, so $d_2(\cdot,\mathcal{H})^2$ is a quadratic polynomial there. Fix $\mathcal{H}_i,\mathcal{H}_j\in\mathbb{H}_{y'}$ and refine the sign partitions for $\mathcal{H}_i$ and $\mathcal{H}_j$; on each resulting polyhedral cell, the bisector condition
$d_2(x,\mathcal{H}_i)=d_2(x,\mathcal{H}_j)$
is equivalent to a single quadratic equation, hence $\smash{B_{ij}^{(2)}}$ is contained in a finite union of quadratic hypersurfaces.

Finally, Voronoi regions are not necessarily polyhedral when $p=2$. In $\mathbb{R}^2$, let $\mathbb{H}_{y'}=\{\mathcal{H}_1,\mathcal{H}_2\}$ with
$\mathcal{H}_1=\{0\}\times[0,1]$ and $\mathcal{H}_2=\{(1,0)\}$ (a degenerate rectangle).
On the strip $\{(x_1,x_2): x_1\ge 0,\ 0\le x_2\le 1\}$, one has
$d_2((x_1,x_2),\mathcal{H}_1)=x_1$ and
$d_2((x_1,x_2),\mathcal{H}_2)=\sqrt{(x_1-1)^2+x_2^2}$,
so the bisector satisfies $x_2^2=2x_1-1$, a parabola segment.
\end{proof}

\newpage 

\section{Illustrations of Counterfactual Maps under Different Norms}
\label{app:illustrative_example_extended}

\begin{figure*}[h!]
    \centering

    \begin{subfigure}[t]{0.495\textwidth}
    \centering\includegraphics[width=0.98\textwidth]{plots/decisionBoundaryRects.pdf}\vspace*{-0.35cm}
    \caption{Decision boundaries of the model.}
   \label{fig:decision_boundary_bis}
  \end{subfigure}
    \hfill
    \begin{subfigure}[t]{0.495\textwidth}
    \centering\includegraphics[width=0.98\textwidth]{plots/VD_norm_1.pdf}\vspace*{-0.35cm}
  \caption{Counterfactual map for the $L_1$ norm.}
   \label{fig:cf_map_l1}
  \end{subfigure}

      \begin{subfigure}[t]{0.495\textwidth}
    \centering\includegraphics[width=0.98\textwidth]{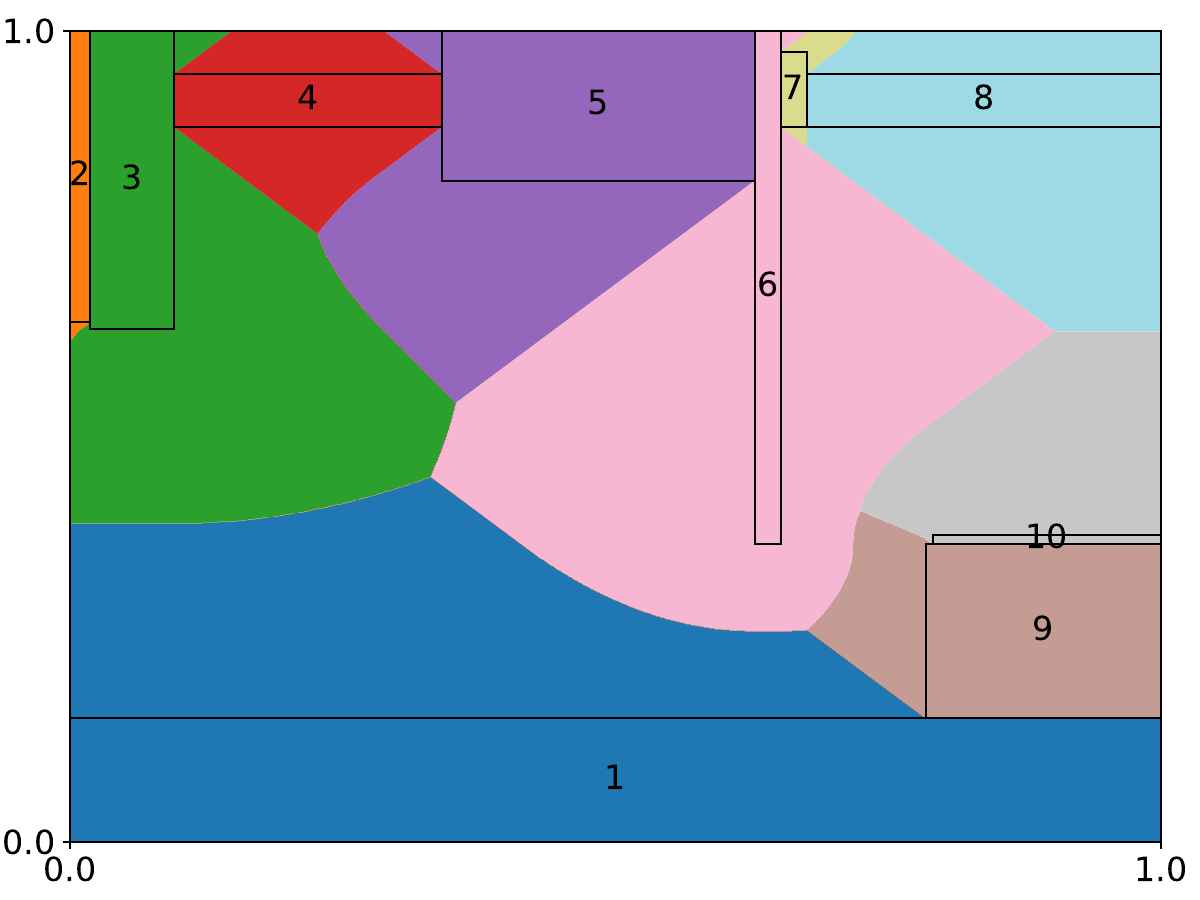}\vspace*{-0.35cm}
  \caption{Counterfactual map for the $L_2$ norm.}
   \label{fig:cf_map_l2}
  \end{subfigure}
      \begin{subfigure}[t]{0.495\textwidth}
    \centering\includegraphics[width=0.98\textwidth]{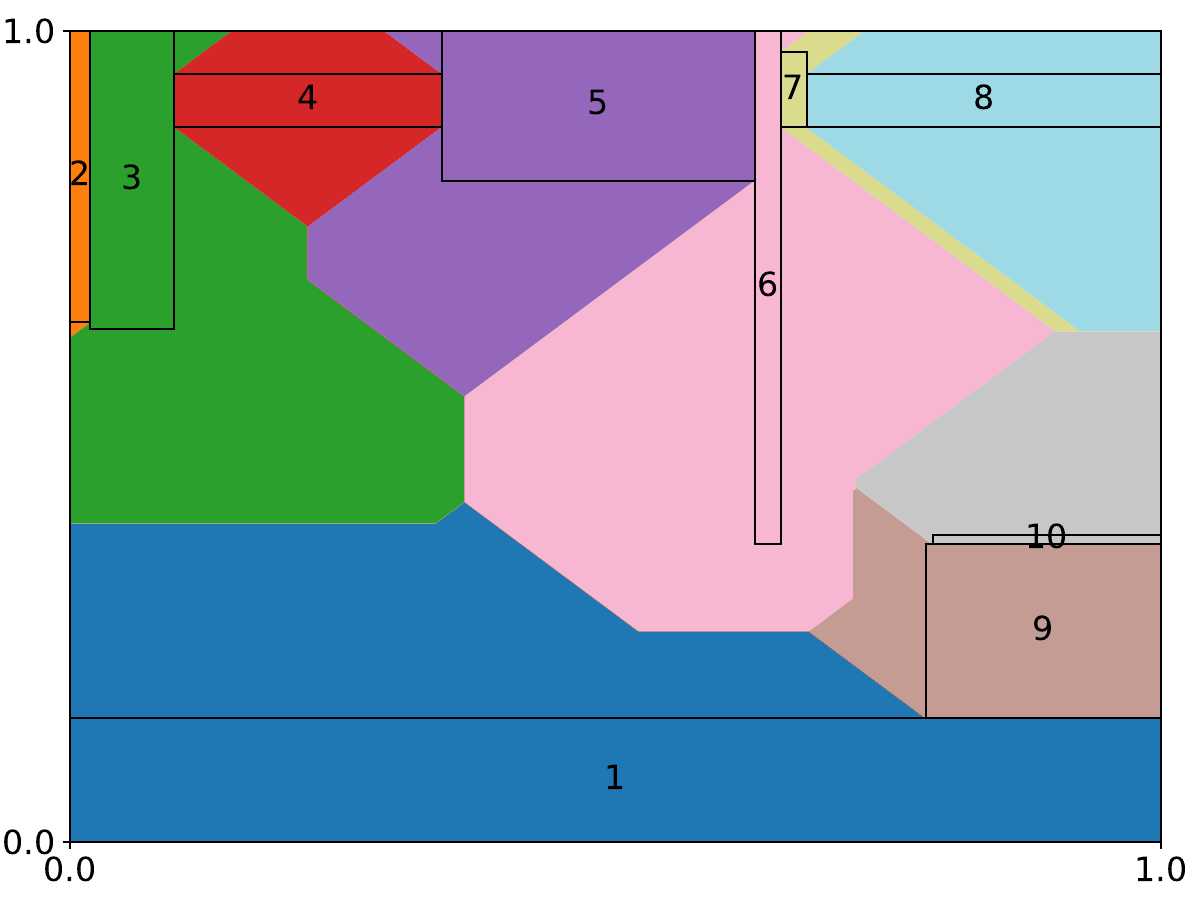}\vspace*{-0.35cm}
  \caption{Counterfactual map for the $L_\infty$ norm.}
   \label{fig:cf_map_linf}
  \end{subfigure}

    \caption{Counterfactual maps using different norms for target class \smash{\colorbox[HTML]{7F7F7F}{\textcolor{white}{$y_2$}}}.
    For any input query and distance, the map identifies the closest hyperrectangle of this class; projecting the query onto that region immediately yields a globally optimal counterfactual. This simple two-dimensional example uses a random forest with two depth-5 trees trained on the ``blobs'' dataset.}
    \label{fig:illustrative_example_extended}
\end{figure*}

\section{Proof of Theorem~\ref{thm:kd_exact}} 
\label{kd_exact_proof}

\subsection{Setting}

Fix an integer $m \ge 1$ and a norm index $1 \le p \le \infty$. We consider the metric space
\[
  (\mathbb{R}^m, d_p), \qquad d_p(x,x') \triangleq \|x - x'\|_p.
\]

Let $\mathbb{H}_{y'} = \{\mathcal{H}_1, \dots, \mathcal{H}_n\}$ be the axis-aligned hyperrectangles
associated with target class $y'$ in $\mathbb{R}^m$, where each
\[
  \mathcal{H}_i \triangleq [a^{(i)}, b^{(i)}]
  = \bigl\{ z \in \mathbb{R}^m : a^{(i)}_k \le z_k \le b^{(i)}_k \text{ for all } k = 1, \dots, m \bigr\},
\]
with $a^{(i)}_k \le b^{(i)}_k$ for all $k \in \{1, \ldots, m\}$.

For any nonempty set $S \subseteq \mathbb{R}^m$ and any $x \in \mathbb{R}^m$, define
\[
  d_p(x, S) \triangleq \inf_{x' \in S} \|x - x'\|_p.
\]

Given a query point $x \in \mathbb{R}^m$, we want to find an index
\[
  i^* \in \{1,\dots,n\}
\]
such that
\[
  d_p(x, \mathcal{H}_{i^*}) = \min_{1 \le i \le n} d_p(x, \mathcal{H}_i).
\]

\subsection{KD-Tree Structure}

We assume a KD-tree construction over $\mathbb{H}_{y'}$ producing a rooted binary tree $T$ whose nodes are denoted by $v$. For each node $v$, we associate:

\begin{itemize}
  \item a nonempty subset of rectangles $\mathbb{H}(v) \subseteq \mathbb{H}_{y'}$ (those stored in the subtree rooted at $v$);
  \item an axis-aligned bounding box
  \[
    B(v) \triangleq [\ell(v), u(v)] \subseteq \mathbb{R}^m
  \]
  such that
  \[
    \bigcup_{\mathcal{H} \in \mathbb{H}(v)} \mathcal{H} \subseteq B(v).
  \]
  note that $B(v)$ is defined as the smallest bounding box. 
\end{itemize}

Leaves $v$ satisfy $|\mathbb{H}(v)| \le k$ (for some fixed $k$), and we store those rectangles explicitly in the leaf. Internal nodes are built by splitting $\mathbb{H}(v)$ into two disjoint subsets $\mathbb{H}(v_L)$ and $\mathbb{H}(v_R)$ according to some coordinate and threshold (the KD-tree splitting rule).

The construction guarantees the following invariant.

\medskip

\noindent\textbf{Tree Bounding Box Invariant.} 
For every node $v$ and each child $c$ of $v$,
\[
  \mathbb{H}(c) \subseteq \mathbb{H}(v), 
  \qquad
  \bigcup_{\mathcal{H} \in \mathbb{H}(c)} \mathcal{H} \subseteq B(c) \subseteq B(v).
\]

In particular, each child bounding box is contained in the parent bounding box and contains all rectangles in that subtree.

\subsection{Query Algorithm}

Fix $x \in \mathbb{R}^m$ and define, for each node $v$ of the KD-tree,
\[
  lb(v) := d_p(x, B(v)).
\]
We call $lb(v)$ the \emph{node lower bound}.

The query algorithm \texttt{QueryNearest}$(x, \text{root}, p)$ is as follows.

\begin{itemize}
  \item Initialize:
  \[
    d^* \gets +\infty, \qquad i^* \gets \text{null}.
  \]
  Let $Q$ be a min-priority queue keyed by $lb(v)$. Insert the root node $r$ with key $lb(r)$.

  \item While $Q$ is not empty:
  \begin{enumerate}
    \item Extract a node $v$ with smallest key $lb(v)$ from $Q$.
    \item If $lb(v) \ge d^*$, \emph{stop the loop}.
    \item If $v$ is a leaf:
      \begin{itemize}
        \item For each rectangle $\mathcal{H} \in \mathbb{H}(v)$,
        compute $d = d_p(x, \mathcal{H})$. If $d < d^*$, then set $d^* \gets d$ and $i^*$ to be the index of $\mathcal{H}$.
      \end{itemize}
    \item If $v$ is an internal node:
      \begin{itemize}
        \item For each child $c$ of $v$, compute $lb(c) = d_p(x, B(c))$. If $lb(c) < d^*$, insert $c$ in $Q$ with key $lb(c)$.
      \end{itemize}
  \end{enumerate}
\end{itemize}

At the end, the algorithm returns $(i^*, d^*)$.

\subsection{Lemmas}

\begin{lemma}[Distance monotonicity under inclusion]\label{lem:dis-monot}
Let $S, T \subseteq \mathbb{R}^m$ with $S \subseteq T \neq \varnothing$. Then, for all $x \in \mathbb{R}^m$,
\[
  d_p(x, T) \le d_p(x, S).
\]
\end{lemma}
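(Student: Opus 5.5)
The argument goes straight from the definition of $d_p(x,\cdot)$ as an infimum: an infimum taken over a larger set can only be smaller or equal. I will treat the case $S\neq\varnothing$ first, and then deal separately with the degenerate case $S=\varnothing$ allowed by the hypothesis $S\subseteq T\neq\varnothing$.

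For $S\neq\varnothing$, fix $x\in\mathbb{R}^m$ and any $x'\in S$. Since $S\subseteq T$, the point $x'$ lies in $T$, so $\|x-x'\|_p$ is one of the values over which $d_p(x,T)$ is the infimum. Hence
\[
d_p(x,T)=\inf_{t\in T}\|x-t\|_p\le \|x-x'\|_p .
\]
This holds for every $x'\in S$, so $d_p(x,T)$ is a lower bound of the set $\{\|x-x'\|_p : x'\in S\}$. The infimum is the greatest lower bound, so $d_p(x,T)\le \inf_{x'\in S}\|x-x'\|_p=d_p(x,S)$.

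For $S=\varnothing$, I will adopt the standard convention $\inf\varnothing=+\infty$, so $d_p(x,S)=+\infty$. Since $T$ is nonempty, $d_p(x,T)$ is a finite nonnegative real, and the inequality holds trivially. I will add one sentence making this convention explicit. The paper defines $d_p(x,S)$ only for nonempty $S$, so without it the empty case is undefined.

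There is no genuine obstacle. The only point needing care is this empty-set convention, and no property specific to $p$ is used, so the lemma holds uniformly for $1\le p\le\infty$. In the proof of Theorem~\ref{thm:kd_exact}, the lemma is then applied with $S=\mathcal{H}\in\mathbb{H}(v)$ and $T=B(v)$. By the Tree Bounding Box Invariant, this gives $lb(v)\le d_p(x,\mathcal{H})$ for every rectangle stored in the subtree of $v$.
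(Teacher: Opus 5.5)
Your proof is correct and follows the same route as the paper: both rest on the fact that an infimum over a larger set is no greater, which you simply unpack via the greatest-lower-bound property. Your treatment of the case $S=\varnothing$ via the convention $\inf\varnothing=+\infty$ is a small piece of care the paper omits, since the paper defines $d_p(x,S)$ only for nonempty $S$.
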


\begin{proof}
By definition,
\[
  d_p(x, T) = \inf_{x' \in T} \|x - x'\|_p.
\]
Since $S \subseteq T$, we have
\[
  \inf_{x' \in T} \|x - x'\|_p 
  \le \inf_{x' \in S} \|x - x'\|_p,
\]
because the infimum is taken over a larger set. Thus $d_p(x, T) \le d_p(x, S)$.
\end{proof}

\begin{lemma}[Bounding box as a lower bound]\label{lem:bbox}
For any node $v$ and any rectangle $\mathcal{H} \in \mathbb{H}(v)$, we have
\[
  lb(v) = d_p(x, B(v)) \le d_p(x, \mathcal{H}).
\]
\end{lemma}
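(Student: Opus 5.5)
The argument is a direct combination of the Tree Bounding Box Invariant with Lemma~\ref{lem:dis-monot}. Fix a node $v$ and a rectangle $\mathcal{H}\in\mathbb{H}(v)$. The invariant states that $\bigcup_{\mathcal{H}'\in\mathbb{H}(v)}\mathcal{H}'\subseteq B(v)$, so in particular $\mathcal{H}\subseteq B(v)$. To apply Lemma~\ref{lem:dis-monot} with $S=\mathcal{H}$ and $T=B(v)$, I need $S\subseteq T\neq\varnothing$.

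For nonemptiness, each $\mathcal{H}=[a,b]$ satisfies $a\le b$ coordinatewise, so $\mathcal{H}\neq\varnothing$. Since $B(v)\supseteq\mathcal{H}$, we also have $B(v)\neq\varnothing$. Both infima are therefore over nonempty sets and are finite, so $d_p(x,\mathcal{H})$ and $d_p(x,B(v))$ are well-defined real numbers.

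Lemma~\ref{lem:dis-monot} now gives $d_p(x,B(v))\le d_p(x,\mathcal{H})$, which is exactly $lb(v)\le d_p(x,\mathcal{H})$ by the definition of $lb(v)$.

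No step is a real obstacle. The only care needed is to check the nonemptiness hypothesis of Lemma~\ref{lem:dis-monot}, and to note that the containment $\mathcal{H}\subseteq B(v)$ holds for every node, not only for leaves. This is stated directly by the invariant. It also follows from the fact that $B(v)$ is the smallest box enclosing the rectangles stored in the subtree rooted at $v$.
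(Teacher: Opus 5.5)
Your proposal is correct and follows the paper's proof exactly: the Tree Bounding Box Invariant gives $\mathcal{H}\subseteq B(v)$, and Lemma~\ref{lem:dis-monot} with $S=\mathcal{H}$, $T=B(v)$ yields the inequality. Your explicit check that $\mathcal{H}$ and $B(v)$ are nonempty is a small addition the paper leaves implicit. The containment $\mathcal{H}\subseteq B(v)$ at every node, including the root, is guaranteed directly by the node-level definition of $B(v)$, not only by the parent--child form of the invariant.
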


\begin{proof}
By the tree bounding box invariant, $\mathcal{H} \subseteq B(v)$. Applying Lemma~\ref{lem:dis-monot} with $S = \mathcal{H}$ and $T = B(v)$ yields
\[
  d_p(x, B(v)) \le d_p(x, \mathcal{H}),
\]
i.e., $lb(v) \le d_p(x, \mathcal{H})$.
\end{proof}

\begin{lemma}[Best-so-far distance over visited rectangles]\label{lem:best_so_far}
At any time during the execution of the algorithm,
\[
  d^* 
  = \min \{ d_p(x, \mathcal{H}_i) : \mathcal{H}_i \text{ has been explicitly evaluated so far} \},
\]
with the convention $\min \emptyset = +\infty$ at initialization.
\end{lemma}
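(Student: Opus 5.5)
The plan is to argue by induction on the number of executed loop iterations, treating the claimed identity as a loop invariant. Let $E_t$ denote the set of rectangles whose distance $d_p(x,\mathcal{H})$ has been explicitly computed after $t$ iterations of the while loop. The invariant to establish is $d^* = \min\{d_p(x,\mathcal{H}) : \mathcal{H}\in E_t\}$, with $\min\emptyset=+\infty$.

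\emph{Base case.} Before the first iteration, $E_0=\emptyset$ and $d^*$ is initialized to $+\infty$, so the invariant holds by the stated convention.

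\emph{Inductive step.} Assume the invariant holds after iteration $t$, and case on what iteration $t+1$ does.
\begin{itemize}
\item If the loop stops at step~2, or if the extracted node $v$ is internal, then no rectangle distance is evaluated and $d^*$ is not modified. Only child bounds $lb(c)$, which are box distances rather than rectangle evaluations, are computed. Hence $E_{t+1}=E_t$ and $d^*$ is unchanged, so the invariant persists.
\item If $v$ is a leaf, the algorithm scans $\mathbb{H}(v)=\{\mathcal{H}^{(1)},\dots,\mathcal{H}^{(r)}\}$ in sequence. For each $\mathcal{H}^{(j)}$ it replaces $d^*$ by $\min(d^*, d_p(x,\mathcal{H}^{(j)}))$, since the update fires exactly when $d<d^*$. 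An inner induction on $j$ then gives $d^* = \min(\min_{E_t} d_p(x,\cdot), \min_{j'\le j} d_p(x,\mathcal{H}^{(j')}))$. This uses the identity $\min(\min A, a)=\min(A\cup\{a\})$, which also covers the case $A=\emptyset$ because of the $+\infty$ convention. At $j=r$ this is exactly the minimum over $E_{t+1}=E_t\cup\mathbb{H}(v)$.
\end{itemize}

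\emph{Intermediate states.} The phrase ``at any time'' also covers moments in the middle of a leaf scan. The inner induction handles these directly, as long as $E$ is read as the set of rectangles evaluated up to that moment. The invariant is also unaffected if the same rectangle is evaluated twice, since sets absorb repeats and the minimum is unchanged. One subtlety is that ties are handled with strict $<$, so $d^*$ tracks the minimum value, while $i^*$ merely tracks some minimizer.

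I expect no real obstacle here. The only care needed is bookkeeping: define ``explicitly evaluated'' precisely, so that bounding-box computations in step~4 are not counted, and cover the mid-scan states. The minimum is always attained, because $E_t$ is finite.
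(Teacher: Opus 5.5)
Your proof is correct and takes the same approach as the paper: both treat $d^*$ as a running minimum that starts at $+\infty$ over the empty set of evaluated rectangles and is updated as $d^* \gets \min(d^*, d)$ at each leaf evaluation. Your version only makes explicit the bookkeeping the paper leaves implicit, namely the iterations that evaluate nothing, the intermediate states during a leaf scan, and repeated evaluations.
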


\begin{proof}
Initially, no rectangle has been evaluated and $d^* = +\infty$, so the statement holds.

Whenever the algorithm evaluates a new rectangle $\mathcal{H} \in \mathbb{H}_{y'}$ (this happens only in leaves), it computes its distance $d = d_p(x, \mathcal{H})$. If $d < d^*$, it updates $d^* \gets d$; otherwise $d^*$ remains unchanged. This is exactly the standard maintenance of a running minimum. Therefore, after each update, $d^*$ is the minimum over all rectangles evaluated so far.
\end{proof}

We now formalize a coverage property for unvisited rectangles.

\begin{lemma}[Coverage invariant]
\label{lem:coverage}
Call a rectangle \emph{visited} if its distance has been computed (i.e., it has been iterated over in some leaf). At any iteration of the main loop, every unvisited rectangle $\mathcal{H}$ satisfies at least one of the following:
\begin{itemize}
  \item $\mathcal{H} \in \mathbb{H}(v)$ for some node $v$ currently in the priority queue $Q$; or
  \item there exists a node $u$ such that $\mathcal{H} \in \mathbb{H}(u)$ and $lb(u) \ge d^*$, i.e., $\mathcal{H}$ cannot yield a distance strictly smaller than $d^*$.
\end{itemize}
\end{lemma}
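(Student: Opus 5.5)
We argue by induction on the number of iterations of the main loop, treating the invariant as a statement about the current state $(Q, d^*, \text{visited set})$ that must hold after every step of \texttt{QueryNearest}. Throughout, we use that $d^*$ is nonincreasing over time (Lemma~\ref{lem:best_so_far}). This monotonicity is what makes the second alternative stable: once $lb(u)\ge d^*$ holds for some node $u$, it keeps holding as $d^*$ decreases. We also use that every rectangle of $\mathbb{H}_{y'}$ lies in exactly one leaf, so the sets $\mathbb{H}(c)$ of the children of a node partition $\mathbb{H}(v)$, by the KD-tree splitting rule.

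\emph{Base case.} Before the first iteration, $Q=\{r\}$ and $\mathbb{H}(r)=\mathbb{H}_{y'}$. Every rectangle therefore satisfies the first alternative.

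\emph{Inductive step.} Suppose the invariant holds before an iteration that pops node $v$, and fix an unvisited rectangle $\mathcal{H}$ after that iteration. If $\mathcal{H}$ was witnessed by the second alternative through a node $u$, the same $u$ still works, because $d^*$ can only have decreased. If $\mathcal{H}\in\mathbb{H}(w)$ for some $w\in Q$ with $w\neq v$, then $w$ is still in $Q$ and nothing changes. The remaining case is $\mathcal{H}\in\mathbb{H}(v)$, which splits by the branch taken.

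\begin{itemize}
\item \emph{Break branch} ($lb(v)\ge d^*$): $v$ itself witnesses the second alternative. The invariant is only claimed at iterations of the loop, so the state at termination satisfies it as well.
\item \emph{Leaf branch:} every rectangle of $\mathbb{H}(v)$ is evaluated, so $\mathcal{H}$ would be visited, contradicting the assumption that it is unvisited.
\item \emph{Internal branch:} $\mathcal{H}\in\mathbb{H}(c)$ for some child $c$. Either $c$ is inserted into $Q$, giving the first alternative, or it is discarded because $lb(c)\ge d^*$, giving the second alternative with $u=c$.
\end{itemize}

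The only delicate point is the interplay between a witness $u$ and later updates of $d^*$. Here we rely on monotonicity: $d^*$ never increases, so any inequality $lb(u)\ge d^*$ persists. We must also read the second alternative together with Lemma~\ref{lem:bbox}, so that $d_p(x,\mathcal{H})\ge lb(u)\ge d^*$, which justifies the gloss ``cannot yield a distance strictly smaller than $d^*$''. Note that in the internal branch, $d^*$ is unchanged during that iteration, so the comparison used at insertion time is the current one.
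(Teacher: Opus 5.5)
Your proof is correct and follows essentially the same route as the paper: induction over loop iterations with the same break/leaf/internal case split, using the partition of $\mathbb{H}(v)$ among the children and the insertion test $lb(c) < d^*$. The paper leaves two points implicit, and you make them explicit: you invoke the monotonicity of $d^*$, so that second-alternative witnesses stay valid when a leaf update lowers $d^*$, and you treat the popped node $v$ in the break branch as a witness for the second alternative.
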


\begin{proof}
We proceed by induction on the number of iterations of the main loop.

\emph{Initialization.}
At the beginning, no rectangle is visited, and $Q$ contains only the root $r$. Since $\mathbb{H}(r) = \mathbb{H}_{y'}$, every rectangle belongs to $\mathbb{H}(r)$, and the first condition holds.

\emph{Inductive step.}
Assume the lemma holds at the beginning of an iteration. Let $v$ be the node extracted from $Q$ in that iteration. We consider cases:

\begin{itemize}
  \item If the algorithm stops immediately because $lb(v) \ge d^*$, then no further iteration occurs, so we are done.
  \item If $v$ is a leaf: then all rectangles in $\mathbb{H}(v)$ are evaluated in this iteration and thus become visited. Rectangles not in $\mathbb{H}(v)$ are unaffected and remain covered as in the inductive hypothesis. The lemma continues to hold.
  \item If $v$ is internal: then $\mathbb{H}(v)$ is partitioned into the children subsets, e.g.,
  \[
    \mathbb{H}(v) = \mathbb{H}(v_L) \cup \mathbb{H}(v_R),
    \qquad
    \mathbb{H}(v_L) \cap \mathbb{H}(v_R) = \varnothing.
  \]
  For each child $c$, we compute $lb(c)$ and insert $c$ into $Q$ only if $lb(c) < d^*$. Thus any rectangle $\mathcal{H} \in \mathbb{H}(v)$ for which its child $c$ (i.e., child $c$ such that $\mathcal{H} \in \mathbb{H}(c)$) is inserted into $Q$ is covered by a node in $Q$ (the first condition). 

  If, on the contrary, a child $c$ is not inserted because $lb(c) \ge d^*$, then for any $\mathcal{H} \in \mathbb{H}(c)$, Lemma~\ref{lem:bbox} gives
  \[
    d_p(x, \mathcal{H}) \ge lb(c) \ge d^*,
  \]
  so such rectangles satisfy the second condition in the statement of this lemma.
\end{itemize}

Thus, after handling node $v$, each unvisited rectangle still satisfies one of the two conditions, and the lemma holds for the next iteration (or the algorithm terminates).
\end{proof}

\subsection{Main Optimality Theorem}

\begin{theorem}[Exactness of KD-Tree Nearest-Hyperrectangle Search]
Fix $x \in \mathbb{R}^m$ and $1 \le p \le \infty$. Run \texttt{QueryNearest}$(x, \text{root}, p)$ on the KD-tree built over $\mathbb{H}_{y'} = \{\mathcal{H}_1, \dots, \mathcal{H}_n\}$ as described above, and let the algorithm terminate with output $(i^*, d^*)$. Then
\[
  d^* = d_p(x, \mathcal{H}_{i^*}) = \min_{1 \le i \le n} d_p(x, \mathcal{H}_i).
\]
In other words, the algorithm returns a rectangle whose distance to $x$ is globally minimal among rectangles in $\mathbb{H}_{y'}$.
\end{theorem}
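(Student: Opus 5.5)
The proof combines the lemmas just established: termination first, then upper and lower bounds on $d^*$ at termination.

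\emph{Termination.} Each node of the finite tree $T$ is inserted into $Q$ at most once, because a node enters $Q$ only when its unique parent is expanded. Hence the loop runs finitely many times and ends either by the break or because $Q=\varnothing$.

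\emph{Upper bound.} If $d^*<\infty$ at termination, Lemma~\ref{lem:best_so_far} shows that $d^*$ is the minimum of $d_p(x,\mathcal H_i)$ over the evaluated rectangles. Since $i^*$ is updated exactly when $d^*$ is, we get $d^*=d_p(x,\mathcal H_{i^*})\ge \min_i d_p(x,\mathcal H_i)$.

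It remains to rule out $d^*=+\infty$. While $d^*=\infty$, every child has finite $lb(c)<d^*$, so nothing is pruned and the break cannot trigger. The search therefore descends until it reaches a leaf, which is nonempty, and evaluates a rectangle.

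\emph{Lower bound.} The reverse inequality uses the coverage invariant, Lemma~\ref{lem:coverage}, at the final iteration. Take any unvisited rectangle $\mathcal H$; the visited ones already satisfy $d_p(x,\mathcal H)\ge d^*$ by Lemma~\ref{lem:best_so_far}. By the invariant, either some node $u$ with $\mathcal H\in\mathbb H(u)$ has $lb(u)\ge d^*$, or $\mathcal H\in\mathbb H(v)$ for some $v\in Q$.

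In the first case, Lemma~\ref{lem:bbox} gives $d_p(x,\mathcal H)\ge lb(u)\ge d^*$.

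One subtlety is that $d^*$ only decreases over time. The $d^*$ at the moment $u$ was pruned is therefore at least the final $d^*$, so the inequality persists. I would state the invariant's second condition relative to the current $d^*$ and note that monotonicity preserves it.

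In the second case, $Q$ is nonempty at termination, so the loop ended via the break. The popped node $v_0$ had minimal key, giving $lb(v)\ge lb(v_0)\ge d^*$ for every $v$ remaining in $Q$ and for $v_0$ itself. Lemma~\ref{lem:bbox} then yields $d_p(x,\mathcal H)\ge d^*$.

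If instead $Q=\varnothing$, only the first case can occur. In all cases every rectangle satisfies $d_p(x,\mathcal H_i)\ge d^*$, so $d^*=\min_i d_p(x,\mathcal H_i)$.

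\emph{Counterfactual claim.} By \eqref{eq:dp_delta_appendix}, the clamp $\Pi_{\mathcal H^\star}(x)$ attains $d_p(x,\mathcal H^\star)$ and lies in a region labelled $y'$. Any valid counterfactual $x'$ with $\mathcal T(x')=y'$ lies in some $\mathcal H\in\mathbb H_{y'}$, since $\mathbb H$ partitions $\mathcal X$. Hence $\|x-x'\|_p\ge d_p(x,\mathcal H)\ge d^*$, which proves global optimality.

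\emph{Main obstacle.} The delicate point is the interaction between the break condition and nodes still sitting in $Q$. Those nodes were inserted under an older, larger $d^*$. The argument must use the min-priority ordering at the moment of the break, rather than the insertion test, to certify that they cannot contain a better rectangle. The monotone decrease of $d^*$ has to be invoked explicitly there and in the pruned-node case.
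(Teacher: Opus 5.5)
Your proof is correct and follows the paper's argument. Like the paper, you use Lemma~\ref{lem:best_so_far} to identify $d^*$ with the best evaluated distance, then the coverage invariant (Lemma~\ref{lem:coverage}) with Lemma~\ref{lem:bbox}, split on whether the loop ends with an empty queue or via the break, and in the break case use the minimal key of the popped node. Your additional steps (termination, ruling out $d^*=+\infty$, and noting that $d^*$ is nonincreasing so pruned nodes remain certified) only make explicit what the paper leaves implicit, and the closing counterfactual claim matches the main-text Theorem~\ref{thm:kd_exact}.
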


\begin{proof}
By Lemma~\ref{lem:best_so_far}, at termination we have
\[
  d^* 
  = \min \{ d_p(x, \mathcal{H}_i) : \mathcal{H}_i \text{ has been explicitly evaluated} \}.
\]
Thus it suffices to show:
\begin{equation}
  \label{eq:no-better-unvisited}
  \text{No unevaluated rectangle } \mathcal{H} \text{ satisfies } d_p(x, \mathcal{H}) < d^*.
\end{equation}

Suppose, for contradiction, that there exists an unevaluated rectangle $\mathcal{H}$ with $d_p(x, \mathcal{H}) < d^*$. Consider the iteration in which the algorithm terminates. There are two possible reasons for termination.

\medskip
\noindent\emph{Case 1: The queue $Q$ becomes empty.}

If $Q$ is empty, there is no node left to process. By Lemma~\ref{lem:coverage}, any unvisited rectangle $\mathcal{H}$ must then fall into the second condition, i.e., there exists a node $u$ such that $\mathcal{H} \in \mathbb{H}(u)$ and $lb(u) \ge d^*$. By Lemma~\ref{lem:bbox},
\[
  d_p(x, \mathcal{H}) \ge lb(u) \ge d^*,
\]
contradicting $d_p(x, \mathcal{H}) < d^*$.

\medskip
\noindent\emph{Case 2: Early stopping when $lb(v) \ge d^*$.}

The other termination condition is that the algorithm extracts a node $v$ from $Q$ with minimal key and finds that
\[
  lb(v) \ge d^*,
\]
and thus breaks out of the loop.

Since $v$ has the smallest key in $Q$, every remaining node $w \in Q$ satisfies
\[
  lb(w) \ge lb(v) \ge d^*.
\]
Let $\mathcal{H}$ be any unvisited rectangle. By Lemma~\ref{lem:coverage}, there exists either a node $w \in Q$ with $\mathcal{H} \in \mathbb{H}(w)$, or a node $u$ with $\mathcal{H} \in \mathbb{H}(u)$ and $lb(u) \ge d^*$. In either case, applying Lemma~\ref{lem:bbox} yields
\[
  d_p(x, \mathcal{H}) \ge d^*.
\]
Thus no unvisited rectangle can have distance strictly less than $d^*$, contradicting the hypothesis and proving~\eqref{eq:no-better-unvisited}.

\medskip

Finally, in all possible termination scenarios, every rectangle $\mathcal{H}$ satisfies
\[
  d_p(x, \mathcal{H}) \ge d^*,
\]
and at least one visited rectangle (namely $\mathcal{H}_{i^*}$) satisfies $d_p(x, \mathcal{H}_{i^*}) = d^*$. Therefore,
\[
  d^* = \min_{1 \le i \le n} d_p(x, \mathcal{H}_i),
\]
and the algorithm is exact.
\end{proof}

\FloatBarrier

\section{Additional Experimental Results} \label{sec:additional_results}

\subsection{Detailed Datasets Table}\label{app:datasets_detailed}

\begin{table}[h!]
\centering
\small
\caption{Summary of the datasets used in our experiments. Columns \(N\), \(O\), \(C\), and \(B\) report the number of numerical, ordinal, categorical, and binary features, respectively. Categorical features are one-hot encoded during preprocessing, resulting in a total of \#$\mathcal{X}$ (OHE) features and \#$\mathcal{Y}$ classes.}
\label{tab:full_datasets}
\begin{tabular}{lrr|rrrr|r|r}
\toprule
Dataset & \# samples & \# features & \(N\) & \(O\) & \(C\) & \(B\) & \#$\mathcal{X}$ (OHE) & \#$\mathcal{Y}$ \\
\midrule
BC: Breast Cancer Wisconsin &    683 &  9 &  0 &  9 & 0 &  0 &  9 & 2 \\
CP: COMPAS                  &  6,907 &  6 &  0 &  0 & 2 &  4 &  13 & 2 \\
FI: FICO                    & 10,459 & 10 & 0 &  0 & 6 &  4 & 23 & 2 \\
PD: Pima-Diabetes           &    768 & 8 & 8 & 0 & 0 &  0 & 8 & 2 \\
SE: Seeds                   &    210 & 7 & 7 & 0 & 0 &  0 & 7 & 3 \\

\bottomrule
\end{tabular}
\end{table}

\subsection{Examples of Counterfactuals} \label{sec:examples_cfs}

In this section, we provide qualitative examples of counterfactual explanations generated by the different approaches considered, for random forests composed of 100 depth-5 trees. We focus on the Breast Cancer Wisconsin (BC) dataset, and consider four counterfactual queries. All queries correspond to fine needle aspirates (FNA) of breast masses, observed under a microscope and assigned a number between 1 and 10 for nine different quantities (high numbers indicating greater likelihood of malignancy). The first two queries are classified as benign by the considered random forest, and each method computes a counterfactual indicating how their characteristics could change so that they are classified as malignant. Conversely, the last two queries are classified as malignant by the random forest, and the associated counterfactuals indicate how their characteristics would need to change to be classified as benign. An analysis of these changes highlights the features that drive malignancy detection. 

We first note that in three out of four cases, the counterfactual returned by Features Tweaking (FT) is invalid, which is due to its heuristic nature.

While both CF-Maps and OCEAN provide provably optimal counterfactuals (therefore exhibiting the same distance, corresponding to the optimum), LIRE and DiCE (which are heuristic methods) return counterfactuals with a significantly higher distance. An interesting case arises for the first query: the counterfactuals returned by CF-Maps and OCEAN differ in the features they modify. To classify the query as malignant, OCEAN increases the Normal-Nucleoli score, whereas CF-Maps does not. In contrast, CF-Maps increases the score of Uniformity-Shape more strongly, resulting in the same overall distance.

This raises an interesting point: in such cases, CF-Maps, like OCEAN, returns a single optimal counterfactual arbitrarily, namely the first optimum found during the search. However, the procedure exploring the KD-Tree (Algorithm~\ref{alg:query}) can be extended in a straightforward way to enumerate all hyperrectangles attaining the optimum. Each such explanation would correspond to a distinct hyperrectangle, i.e., to a different decision region of the explained model, naturally inducing some diversity among equally optimal counterfactuals while preserving the same optimality guarantees. More generally, the same search procedure could also be adapted to explore all hyperrectangles whose distance is within a tolerance of the optimum, thereby constructing a set of good explanations rather than a single one. 

\def\modified#1{\textcolor{blue}{\textbf{#1}}}
\begin{table}[h!]
\centering
\caption{Qualitative examples of counterfactuals for the BC dataset, where features modified with respect to the query are displayed in \modified{blue}.}
\resizebox{\textwidth}{!}{%
\begin{tabular}{lrrrrrrrrr|r}
\toprule
Row & Clump-T & Uniformity-Size & Uniformity-Shape & Adhesion & Cell-Size & Bare-Nuclei & Bland-Chromatin & Normal-Nucleoli & Mitoses & cost \\
\midrule
\multicolumn{11}{c}{Query \#1 (0 $\rightarrow$ 1)} \\
Query & 5 & 1 & 2 & 1 & 2 & 1 & 3 & 1 & 1 & 0 \\
\midrule
OCEAN & 5 & \modified{3} & \modified{3} & 1 & \modified{3} & \modified{3} & 3 & \modified{3} & 1 & 8 \\
CF-Maps & 5 & \modified{3} & \modified{5} & 1 & \modified{3} & \modified{3} & 3 & 1 & 1 & 8 \\
FT [invalid] & 5 & 1 & 2 & \modified{6} & 2 & 1 & 3 & 1 & 1 & 5 \\
LiRE & 5 & \modified{3} & \modified{3} & 1 & \modified{3} & \modified{3} & 3 & \modified{3} & \modified{3} & 10 \\
DiCE & 5 & 1 & \modified{6} & 1 & 2 & \modified{5} & 3 & \modified{3} & \modified{3} & 12 \\
\midrule
\midrule
\multicolumn{11}{c}{Query \#2 (0 $\rightarrow$ 1)} \\
Query & 5 & 3 & 3 & 4 & 2 & 4 & 3 & 4 & 1 & 0 \\
\midrule
OCEAN & 5 & 3 & 3 & 4 & 2 & \modified{5} & 3 & 4 & 1 & 1 \\
CF-Maps & 5 & 3 & 3 & 4 & 2 & \modified{5} & 3 & 4 & 1 & 1 \\
FT & 5 & 3 & 3 & \modified{6} & 2 & 4 & 3 & 4 & 1 & 2 \\
LiRE & 5 & 3 & \modified{4} & 4 & \modified{3} & \modified{5} & 3 & \modified{7} & 1 & 6 \\
DiCE & 5 & 3 & 3 & 4 & 2 & \modified{7} & 3 & \modified{5} & 1 & 4 \\
\midrule
\midrule
\multicolumn{11}{c}{Query \#3 (1 $\rightarrow$ 0)} \\
Query & 5 & 3 & 5 & 1 & 8 & 10 & 5 & 3 & 1 & 0 \\
\midrule
OCEAN & 5 & \modified{2} & \modified{4} & 1 & 8 & 10 & \modified{4} & \modified{2} & 1 & 4 \\
CF-Maps & 5 & \modified{2} & \modified{4} & 1 & 8 & 10 & \modified{4} & \modified{2} & 1 & 4 \\
FT [invalid] & 5 & 3 & \modified{4} & 1 & 8 & 10 & \modified{4} & 3 & 1 & 2 \\
LiRE & 5 & \modified{4} & 5 & 1 & 8 & \modified{2} & \modified{4} & \modified{6} & 1 & 13 \\
DiCE & 5 & \modified{4} & 5 & 1 & 8 & \modified{1} & \modified{3} & \modified{6} & 1 & 15 \\
\midrule
\midrule
\multicolumn{11}{c}{Query \#4 (1 $\rightarrow$ 0)} \\
Query & 10 & 4 & 3 & 10 & 4 & 10 & 10 & 1 & 1 & 0 \\
\midrule
OCEAN & 10 & \modified{2} & \modified{2} & 10 & 4 & \modified{4} & 10 & 1 & 1 & 9 \\
CF-Maps & 10 & \modified{2} & \modified{2} & 10 & 4 & \modified{4} & 10 & 1 & 1 & 9 \\
FT [invalid] & 10 & 4 & \modified{2} & 10 & 4 & 10 & 10 & 1 & 1 & 1 \\
LiRE & \modified{6} & \modified{2} & \modified{2} & 10 & 4 & \modified{5} & \modified{2} & 1 & 1 & 20 \\
DiCE & \modified{5} & \modified{1} & \modified{2} & 10 & 4 & \modified{5} & \modified{2} & 1 & 1 & 22 \\
\bottomrule
\end{tabular}%
}
\end{table}
\FloatBarrier

\subsection{KD-Trees built by CF-Maps}\label{sec:app_kdtree}

\begin{table}[h!]
    \centering
    \caption{Statistics of the computed KD-Trees, for random forests made of 100 depth-5 trees. For each dataset, we report the number of nodes in the KD-Tree, its memory footprint, as well as the actual number of visited nodes when answering counterfactual queries. We report both the average and standard deviation of each measure.}\label{tab:kd-tree}
    \begin{tabular}{lrrr}
    \toprule
    Dataset & Nodes & Memory [MB] & Visited/query \\
    \midrule
    BC & $(2.48 \pm 0.58) \times 10^{6}$ & $(1.80 \pm 0.42) \times 10^{3}$ & 2738.8 $\pm$ 1026.2 \\
    CP & $(2.80 \pm 0.18) \times 10^{2}$ & $(2.30 \pm 0.20) \times 10^{-2}$ & 17.5 $\pm$ 2.1 \\
    FI & $(4.79 \pm 0.14) \times 10^{3}$ & $(5.02 \pm 0.15) \times 10^{0}$ & 85.0 $\pm$ 9.3 \\
    PD & $(4.54 \pm 0.82) \times 10^{6}$ & $(3.21 \pm 0.58) \times 10^{3}$ & 711.3 $\pm$ 177.4 \\
    SE & $(1.64 \pm 0.34) \times 10^{6}$ & $(1.12 \pm 0.23) \times 10^{3}$ & 546.0 $\pm$ 91.7 \\
    \bottomrule
    \end{tabular}
   \end{table}

\newpage
\subsection{Complementary Results}\label{app:complementary_results}
\begin{figure}[!h]
    \centering
    \includegraphics[width=\linewidth]{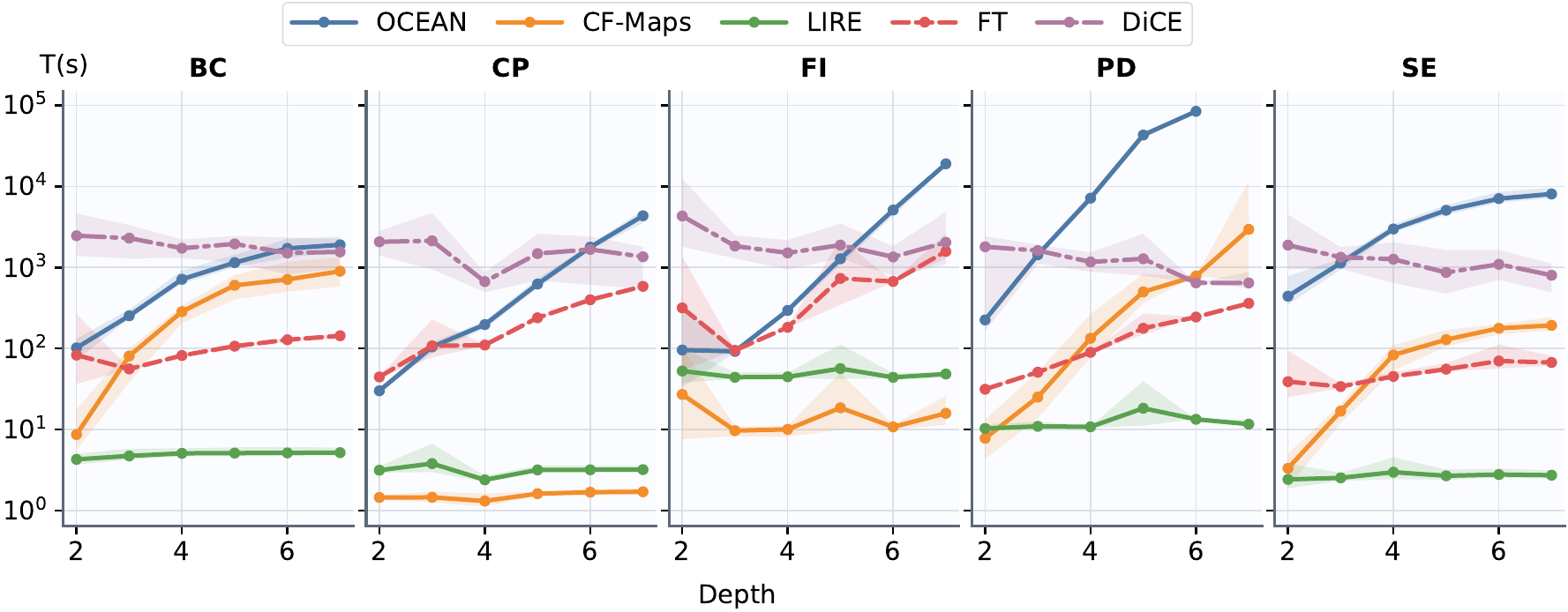}
\caption{Total time (including preprocessing) for generating 1000 counterfactuals, for random forests with 100 trees of varying depth. The shaded area shows the range between the fastest and slowest runs over the five random seeds.} \label{fig:time_vs_depth}
\end{figure}

\begin{table}[htbp]
\centering
\small
\setlength{\tabcolsep}{6pt}
\caption{Counterfactual quality of FT, LIRE and DiCE relative to CF-Maps, for varying numbers of depth-5 trees. The total distance ratio $D$ of each method relative to the optimal distance is computed over \emph{valid} counterfactuals only, whereas the failure rate (\%Fail) is computed over all explained points.} \label{tab:ft_lire_vs_cfmaps_nb_trees}
\begin{tabular}{llcrcrcr}
    \toprule
    \multirow{2}{*}{Dataset} & \multirow{2}{*}{\#Trees} & \multicolumn{2}{c}{FT} & \multicolumn{2}{c}{LIRE} & \multicolumn{2}{c}{DiCE} \\
    \cmidrule(lr){3-4}\cmidrule(lr){5-6}\cmidrule(lr){7-8}
      &  & $D$ & \%Fail & $D$ & \%Fail & $D$ & \%Fail \\
    \midrule
    BC & 5 & 2.4 & 69.2 & 1.7 & 0.0 & 3.5 & 1.6 \\
     & 10 & 2.1 & 82.4 & 2.2 & 0.0 & 3.6 & 2.6 \\
     & 20 & 2.0 & 88.8 & 2.2 & 0.0 & 3.1 & 1.2 \\
     & 50 & 1.8 & 92.1 & 2.3 & 0.0 & 2.9 & 1.8 \\
     & 100 & 1.6 & 93.2 & 2.3 & 0.0 & 2.8 & 2.2 \\
    \midrule
    CP & 5 & 1.1 & 36.4 & 1.1 & 0.0 & 1.1 & 14.3 \\
     & 10 & 1.0 & 38.4 & 1.1 & 0.0 & 1.0 & 15.7 \\
     & 20 & 1.0 & 41.3 & 1.1 & 0.0 & 1.0 & 14.8 \\
     & 50 & 1.0 & 40.4 & 1.0 & 0.0 & 1.0 & 15.3 \\
     & 100 & 1.0 & 41.4 & 1.1 & 0.0 & 1.0 & 15.1 \\
    \midrule
    FI & 5 & 1.1 & 55.1 & 1.2 & 0.0 & 1.3 & 14.9 \\
     & 10 & 1.0 & 60.0 & 1.3 & 0.0 & 1.2 & 15.1 \\
     & 20 & 1.0 & 64.0 & 1.4 & 0.0 & 1.3 & 17.0 \\
     & 50 & 1.0 & 63.0 & 1.3 & 0.0 & 1.3 & 15.6 \\
     & 100 & 1.0 & 63.0 & 1.3 & 0.0 & 1.3 & 16.6 \\
    \midrule
    PD & 5 & 3.9 & 55.5 & 1.7 & 0.0 & 5.8 & 0.4 \\
     & 10 & 4.1 & 67.4 & 2.2 & 0.0 & 5.4 & 0.7 \\
     & 20 & 4.4 & 73.2 & 2.4 & 0.0 & 5.1 & 0.8 \\
     & 50 & 3.6 & 77.8 & 2.5 & 0.0 & 4.7 & 0.7 \\
     & 100 & 3.2 & 79.3 & 2.4 & 0.0 & 4.4 & 1.0 \\
    \midrule
    SE & 5 & 2.8 & 56.2 & 2.5 & 0.0 & 5.5 & 1.4 \\
     & 10 & 2.6 & 64.7 & 2.9 & 0.0 & 5.3 & 1.9 \\
     & 20 & 2.4 & 68.4 & 3.4 & 0.0 & 5.2 & 1.3 \\
     & 50 & 2.0 & 75.0 & 3.4 & 0.0 & 4.9 & 1.2 \\
     & 100 & 1.9 & 78.5 & 3.4 & 0.0 & 4.8 & 1.0 \\
    \bottomrule
    \end{tabular}
\end{table}

\begin{table}[htbp]
\centering
\small
\setlength{\tabcolsep}{6pt}
\caption{Counterfactual quality of FT, LIRE and DiCE relative to CF-Maps, for 100 trees of varying depths. The total distance ratio $D$ of each method relative to the optimal distance is computed over \emph{valid} counterfactuals only, whereas the failure rate (\%Fail) is computed over all explained points.} \label{tab:ft_lire_vs_cfmaps_depth}
\begin{tabular}{llcrcrcr}
    \toprule
    \multirow{2}{*}{Dataset} & \multirow{2}{*}{Depth} & \multicolumn{2}{c}{FT} & \multicolumn{2}{c}{LIRE} & \multicolumn{2}{c}{DiCE} \\
    \cmidrule(lr){3-4}\cmidrule(lr){5-6}\cmidrule(lr){7-8}
     &  & $D$ & \%Fail & $D$ & \%Fail & $D$ & \%Fail \\
    \midrule
    BC & 2 & 1.3 & 94.0 & 2.1 & 0.0 & 3.1 & 2.3 \\
     & 3 & 1.8 & 94.4 & 2.2 & 0.0 & 2.9 & 2.6 \\
     & 4 & 1.7 & 93.7 & 2.2 & 0.0 & 2.8 & 1.9 \\
     & 5 & 1.6 & 93.2 & 2.3 & 0.0 & 2.8 & 2.2 \\
     & 6 & 1.6 & 93.4 & 2.3 & 0.0 & 2.8 & 1.7 \\
     & 7 & 1.5 & 93.1 & 2.2 & 0.0 & 2.8 & 2.0 \\
    \midrule
    CP & 2 & 1.0 & 42.9 & 1.1 & 0.0 & 1.1 & 15.9 \\
     & 3 & 1.0 & 47.6 & 1.0 & 0.0 & 1.1 & 14.5 \\
     & 4 & 1.0 & 42.4 & 1.1 & 0.0 & 1.0 & 13.9 \\
     & 5 & 1.0 & 41.4 & 1.1 & 0.0 & 1.0 & 15.1 \\
     & 6 & 1.0 & 42.7 & 1.0 & 0.0 & 1.0 & 12.8 \\
     & 7 & 1.0 & 42.1 & 1.0 & 0.0 & 1.0 & 14.5 \\
    \midrule
    FI & 2 & 1.0 & 60.3 & 1.4 & 0.0 & 1.3 & 20.4 \\
     & 3 & 1.0 & 64.2 & 1.4 & 0.0 & 1.3 & 18.7 \\
     & 4 & 1.0 & 56.9 & 1.3 & 0.0 & 1.3 & 17.8 \\
     & 5 & 1.0 & 63.0 & 1.3 & 0.0 & 1.3 & 16.6 \\
     & 6 & 1.0 & 63.4 & 1.3 & 0.0 & 1.3 & 15.8 \\
     & 7 & 1.0 & 63.5 & 1.3 & 0.0 & 1.2 & 12.3 \\
    \midrule
     PD & 2 & 1.6 & 85.4 & 1.7 & 0.0 & 3.4 & 2.0 \\
     & 3 & 2.4 & 83.0 & 2.0 & 0.0 & 3.8 & 1.8 \\
     & 4 & 3.0 & 80.4 & 2.3 & 0.0 & 4.2 & 1.4 \\
     & 5 & 3.2 & 79.3 & 2.4 & 0.0 & 4.4 & 1.0 \\
     & 6 & 4.3 & 81.8 & 2.3 & 0.0 & 4.0 & 0.6 \\
     & 7 & 3.9 & 77.6 & 2.8 & 0.0 & 5.1 & 0.6 \\
    \midrule
    SE & 2 & 1.6 & 81.9 & 2.7 & 0.0 & 4.1 & 1.4 \\
     & 3 & 1.8 & 80.5 & 3.0 & 0.0 & 4.3 & 1.5 \\
     & 4 & 1.9 & 79.4 & 3.2 & 0.0 & 4.5 & 1.4 \\
     & 5 & 1.9 & 78.5 & 3.4 & 0.0 & 4.8 & 1.0 \\
     & 6 & 1.9 & 78.6 & 3.4 & 0.0 & 4.8 & 1.3 \\
     & 7 & 2.0 & 78.3 & 3.5 & 0.0 & 4.9 & 1.0 \\
    \bottomrule
    \end{tabular}
\end{table}

\end{document}